\let\llncssubparagraph\subparagraph
\let\subparagraph\paragraph
\let\subparagraph\llncssubparagraph
\newtheorem{theorem}{Theorem}
\newtheorem{proposition}[theorem]{Proposition}
\newcommand{\cf}{\emph{cf.}\xspace}
\newcommand{\bdmath}{\begin{dmath}}
\newcommand{\edmath}{\end{dmath}}
\newcommand{\beq}{\begin{equation}}
\newcommand{\eeq}{\end{equation}}
\newcommand{\bdm}{\begin{displaymath}}
\newcommand{\edm}{\end{displaymath}}
\newcommand{\bea}{\begin{eqnarray}}
\newcommand{\eea}{\end{eqnarray}}
\newcommand{\beal}{\beq \begin{array}{ll}}
\newcommand{\eeal}{\end{array} \eeq}
\newcommand{\beas}{\begin{eqnarray*}}
\newcommand{\eeas}{\end{eqnarray*}}
\newcommand{\ba}{\begin{array}}
\newcommand{\ea}{\end{array}}
\newcommand{\bit}{\begin{itemize}}
\newcommand{\eit}{\end{itemize}}
\newcommand{\ben}{\begin{enumerate}}
\newcommand{\een}{\end{enumerate}}
\newcommand{\calB}{{\cal B}}
\newcommand{\calE}{{\cal E}}
\newcommand{\calI}{{\cal I}}
\newcommand{\calL}{{\cal L}}
\newcommand{\calU}{{\cal U}}
\newcommand{\calV}{{\cal V}}
\newcommand{\setal}{~\emph{et~al.}\xspace}
\newcommand{\M}[1]{{\bm #1}} 
\renewcommand{\boldsymbol}[1]{{\bm #1}}
\newcommand{\hide}[1]{}
\newcommand{\tocheck}[1]{{\color{brown} #1}}
\newcommand{\grayout}[1]{{\color{gray} #1}}
\newcommand{\hiddenText}{{\color{gray} hidden text.}}
\newcommand{\hideWithText}[1]{\hiddenText}
\newcommand{\kron}{\otimes}
\newcommand{\subject}{\text{ subject to }}
\DeclareMathOperator*{\argmax}{arg\,max}
\DeclareMathOperator*{\argmin}{arg\,min}
\newcommand{\prob}[1]{{\mathbb P}\left(#1\right)}
\newcommand{\tran}{^{\mathsf{T}}}
\newcommand{\diag}[1]{\mathrm{diag}\left(#1\right)}
\newcommand{\trace}[1]{\mathrm{tr}\left(#1\right)}
\newcommand{\rank}[1]{\mathrm{rank}\left(#1\right)}
\newcommand{\ones}{{\mathbf 1}}
\newcommand{\zero}{{\mathbf 0}}
\newcommand{\eye}{{\mathbf I}}
\newcommand{\matTwo}[1]{\left[\begin{array}{cc}  #1  \end{array}\right]}
\newcommand{\Real}[1]{ { {\mathbb R}^{#1} } }
\newcommand{\at}[1]{^{(#1)}}
\newcommand{\setdef}[2]{ \{#1 \; {:} \; #2 \} }
\newcommand{\MA}{\M{A}}
\newcommand{\MG}{\M{G}}
\newcommand{\MM}{\M{M}}
\newcommand{\MP}{\M{P}}
\newcommand{\MQ}{\M{Q}}
\newcommand{\MU}{\M{U}}
\newcommand{\MR}{\M{R}}
\newcommand{\MV}{\M{V}}
\newcommand{\MH}{\M{H}}
\newcommand{\ML}{\M{L}}
\newcommand{\MX}{\M{X}}
\newcommand{\MY}{\M{Y}}
\newcommand{\MZ}{\M{Z}}
\newcommand{\vb}{\boldsymbol{b}}
\newcommand{\ve}{\boldsymbol{e}}
\newcommand{\vu}{\boldsymbol{u}}
\newcommand{\vxx}{\boldsymbol{x}} 
\newcommand{\vy}{\boldsymbol{y}}
\newcommand{\vw}{\boldsymbol{w}}
\newcommand{\vzz}{\boldsymbol{z}}
\newcommand{\vlambda}{\boldsymbol{\lambda}}
\newcommand{\scenario}[1]{{\smaller \sf#1}\xspace}
\newcommand{\CVX}{\scenario{CVX}}
\newcommand{\cvx}{{\sf cvx}\xspace}
\newcommand{\blue}[1]{{\color{blue}#1}}
\newcommand{\linkToPdf}[1]{\href{#1}{\blue{(pdf)}}}
\newcommand{\linkToPpt}[1]{\href{#1}{\blue{(ppt)}}}
\newcommand{\linkToCode}[1]{\href{#1}{\blue{(code)}}}
\newcommand{\linkToWeb}[1]{\href{#1}{\blue{(web)}}}
\newcommand{\linkToVideo}[1]{\href{#1}{\blue{(video)}}}
\newcommand{\award}[1]{\xspace} 
\newcommand{\eg}{\emph{e.g.,}\xspace}
\newcommand{\myparagraph}[1]{{\bf #1.}}
\newcommand{\todo}[1]{{(\color{red}TODO: #1)}}
\newcommand{\class}{x}
\newcommand{\classes}{\vxx}
\newcommand{\Classes}{\M{X}}
\newcommand{\Classesmp}{\vw} 
\newcommand{\measuredClass}{\bar{x}}
\newcommand{\measured}{\vzz}
\newcommand{\nodeSet}{\calV}
\newcommand{\nrNodes}{N}
\newcommand{\unarySet}{\calU}
\newcommand{\binarySet}{\calB}
\newcommand{\classSet}{\calL}
\newcommand{\classesOpt}{\vxx^\star}
\newcommand{\nrClasses}{K}
\newcommand{\penaltyTerm}{\bar{\delta}}
\newcommand{\Sone}{S1}
\newcommand{\Stwo}{S2}
\newcommand{\fsdpopt}{f_\text{{\smaller \Sone}}^\star}
\newcommand{\fsdpoptTwo}{f_\text{{\smaller \Stwo}}^\star}
\newcommand{\fsdpround}{\hat{f}_\text{{\smaller \Sone}}^\star}
\newcommand{\fsdproundTwo}{\hat{f}_\text{{\smaller \Stwo}}^\star}
\newcommand{\MZopt}{\MZ^\star_{\Sone}}
\newcommand{\MZoptTwo}{\MZ^\star_{\Stwo}}
\newcommand{\MXrel}{\MX_{\Sone}}
\newcommand{\MXrelTwo}{\MX_{\Stwo}}
\newcommand{\MXopt}{\MX^\star}
\newcommand{\MXrounded}{\hat{\MX}_{\Sone}}
\newcommand{\MXroundedTwo}{\hat{\MX}_{\Stwo}}
\newcommand{\Stiefel}[2]{\text{St}(#1,#2)}
\newcommand{\omitted}[1]{}
\newcommand{\frob}{{\small \text{F}}}
\newcommand{\dars}{\scenario{DARS}}
\newcommand{\fuses}{\scenario{FUSES}}
\newcommand{\LBP}{\scenario{LBP}}
\newcommand{\TRW}{\scenario{TRW-S}}
\newcommand{\aexp}{\scenario{$\alpha$-exp}}
\newcommand{\openGM}{OpenGM2\xspace}
\newcommand{\CPLEX}{CPLEX\xspace}
\newcommand{\extendedVersion}[1]{}
\newcommand{\Pmrf}{P0}
\newcommand{\Pmp}{P1}
\newcommand{\Pzo}{P2}
\newcommand{\Smp}{S1}
\newcommand{\Szo}{S2}
\newcommand{\Rmp}{R1}
\newcommand{\Rzo}{R2}
\newcommand{\foptmp}{f^\star_{\Pmp}}
\newcommand{\foptzo}{f^\star_{\Pzo}}
\newcommand{\fsdpmp}{f^\star_{\Smp}}
\newcommand{\fsdpzo}{f^\star_{\Szo}}
\newcommand{\froundedmp}{\hat{f}_{\Smp}}
\newcommand{\froundedzo}{\hat{f}_{\Szo}}
\newcommand{\Fig}[1]{Fig.~\ref{#1}}
\newcommand{\Table}[1]{Table~\ref{#1}}
\newcommand{\percSubopt}{0.1\%\xspace}
\newcommand{\zoomTime}{18ms}
\newcommand{\claimTime}{faster than\xspace} 
\newcommand{\cumsum}{2 - \nrClasses}
\newcommand{\KpTwo}{\nrClasses - 2} 
\newcommand{\modelOne}{$512\!\times\!256$\xspace}
\newcommand{\modelTwo}{$1024\!\times\!512$\xspace}
\newcommand{\isExtended}[2]{#1} 
\newcommand{\change}[2]{{\color{black}#2}} 
\begin{document}
%
\title{\LARGE \bf
	Accelerated Inference in Markov Random Fields \\ 
	via Smooth Riemannian Optimization
}

\author{Siyi Hu and Luca Carlone
	\thanks{S.\,Hu and L.\,Carlone are with the Laboratory for Information and Decision Systems, Massachusetts Institute of Technology, Cambridge, MA 02139, USA
	{\tt\small \{siyi,lcarlone\}@mit.edu}}%
	\thanks{This work was partially funded by MIT Lincoln Laboratory under the grant ``Fast Semantic Segmentation on Manifold''.}
}

\isExtended{}{
\markboth{IEEE Robotics and Automation Letters. Preprint Version.}
{Hu and Carlone: Accelerated MRFs} 
}

\maketitle              

\begin{abstract}
\emph{Markov Random Fields} (MRFs) are a popular model for 
several pattern recognition and reconstruction 
problems in robotics and computer vision. 
Inference in MRFs is intractable in general and related work resorts to approximation algorithms. 
Among those techniques, semidefinite programming (SDP) relaxations have been shown to provide accurate estimates while 
scaling poorly with the problem size and being typically slow for practical applications.  
Our first contribution is to design a dual ascent method to solve standard SDP relaxations that 
takes advantage of the geometric structure of the problem to speed up computation. 
This technique, named \emph{Dual Ascent Riemannian Staircase} (\dars), is able to solve large problem instances \emph{in seconds}. 
\change{Since our goal is to enable \change{}{SDP based} real-time inference on robotics platforms, we} 
{Our second contribution is to develop a second and faster approach.} 
The backbone of this second approach is a novel SDP relaxation combined with a fast and scalable solver 
based on smooth Riemannian optimization. We show that this approach, 
named \emph{Fast Unconstrained SEmidefinite Solver} (\fuses), can solve large problems \emph{in milliseconds}. 
%
Contrarily to local MRF solvers, \eg loopy belief propagation, our approaches do not require an initial guess. Moreover, 
we leverage recent results from optimization theory to provide per-instance sub-optimality guarantees. 
We demonstrate the proposed approaches in multi-class image segmentation problems. 
Extensive experimental evidence shows that 
(i) \fuses and \dars produce near-optimal solutions, attaining an objective within \percSubopt of the optimum,
(ii) \fuses and \dars are remarkably faster than general-purpose SDP solvers, 
and \fuses is more than two orders of magnitude faster than \dars while attaining similar solution quality,
(iii) \change{\fuses is \claimTime local search methods while being a global solver, and is a good candidate for real-time robotics applications.}{\fuses is \claimTime local search methods while being a global solver.}
\end{abstract}

\begin{IEEEkeywords}
Object Detection, Segmentation and Categorization; Optimization and Optimal Control; Recognition.
\end{IEEEkeywords} 

\section{Introduction}
\label{sec:intro}

\emph{Markov Random Fields} (MRFs) are a popular graphical model for reconstruction and recognition problems in computer vision and robotics, 
including 2D and 3D semantic segmentation, 
stereo reconstruction, image restoration and denoising, 
texture synthesis, object detection, and panorama stitching~\cite{Szeliski08pami-surveyMRF,Blake11book-MRF,Kappes15ijcv-energyMin}.
An MRF can be understood as a factor graph including only unary and binary factors, and where node variables are discrete labels. 
\change{The discrete nature of the variables makes \emph{maximum a posteriori} (MAP) inference in MRFs intractable in general, 
hence several MRF-based applications remain out of reach for real-time robotics.
Our motivating application in this paper is real-time semantic segmentation, which is crucial for the robot to 
understand the surrounding environment and execute high-level tasks.}{The discrete nature of the variables makes \emph{maximum a posteriori} (MAP) inference in MRFs intractable in general, 
and this clashes with the need for real-time inference that characterizes several robotics applications (e.g., semantic understanding, mapping).} 
\change{\emph{Therefore, we are interested in developing real-time MRF solvers 
that can support online operation at scale}.}{ 
%

The literature on MRFs (reviewed in Section~\ref{sec:relatedWork}) is vast and includes methods based on graph cuts, message passing techniques, 
greedy methods, and convex relaxations, to mention a few. These approaches are typically approximation techniques, in the sense that they attempt to 
compute near-optimal MAP estimates efficiently (the problem is NP-hard in general, hence we do not expect to compute exact solutions in polynomial time).}
Among those, semidefinite programming (SDP) relaxations have been recognized to produce accurate approximations~\cite{Kumar08nips}.
On the other hand, the computational cost of general-purpose SDP solvers prevented widespread use of this technique beyond problems with few hundred  
variables~\cite{Keuchel03pami} (semantic segmentation typically involves thousands to millions of variables), and SDPs lost popularity in favor of 
computationally cheaper alternatives including move-making algorithms (based on graph cut) and message passing. 
Move-making methods~\cite{Boykov01pami-graphCut} require 
specific assumptions on the MRF and their performance typically degrades when these assumptions are not satisfied.
\change{}{Message passing methods~\cite{Wainwright05itis,Weiss01itis-beliefPropagation}, on the other hand, may not even converge, even thought they are observed to work very well in practice.}


\newcommand{\myhspace}{\hspace{-3mm}}

\newcommand{\mpw}{4.5cm}
\begin{figure}[t]
	\begin{center}
	\begin{minipage}{\textwidth}
	\hspace{-0.2cm}
	\begin{tabular}{cc}%
		\myhspace
			\begin{minipage}{\mpw}%
			\centering%
			\includegraphics[width=0.95\columnwidth]{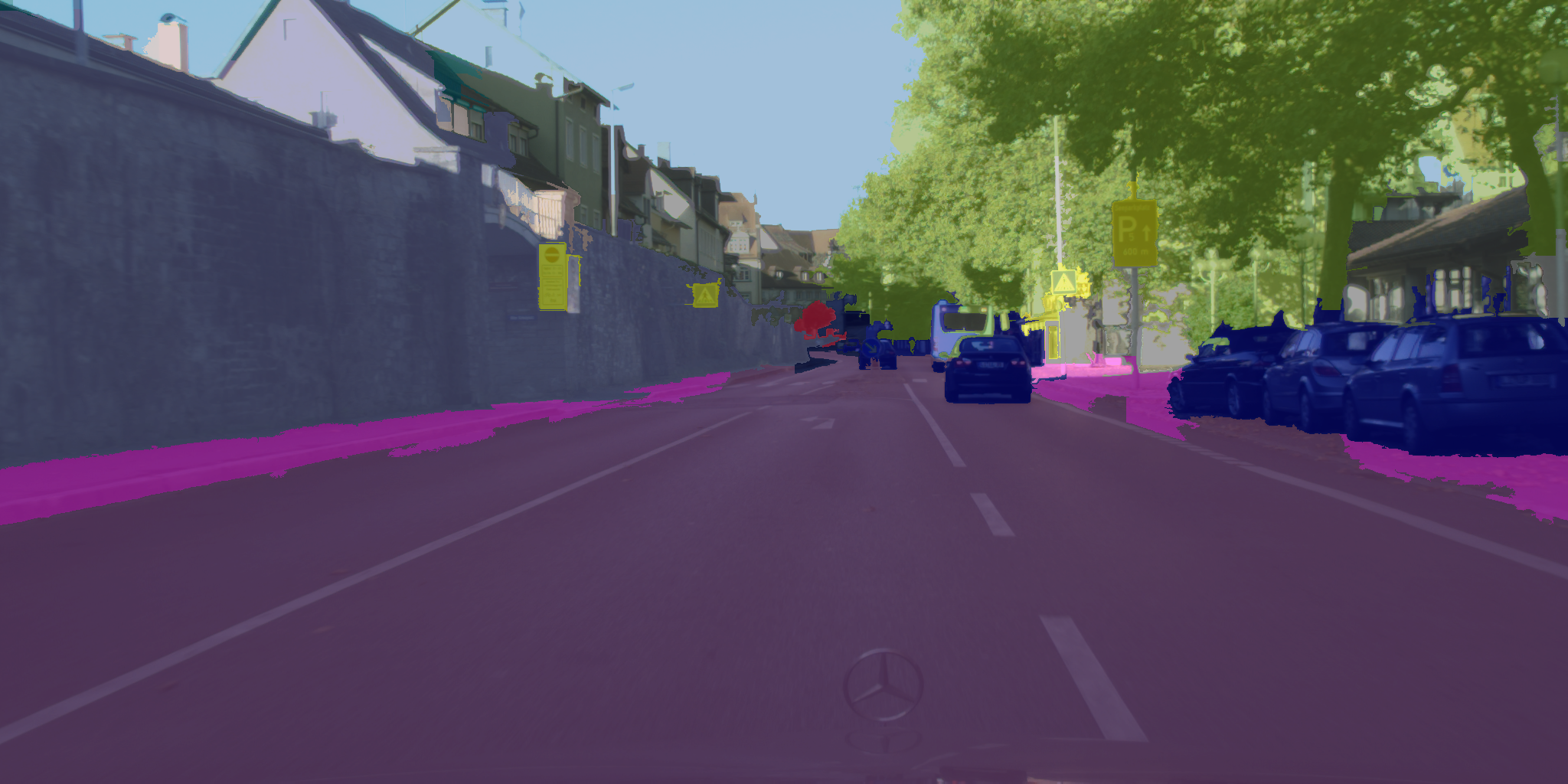} \\
			(a) \fuses  
			\end{minipage}
		& \myhspace
			\begin{minipage}{\mpw}%
			\centering%
			\includegraphics[width=0.95\columnwidth]{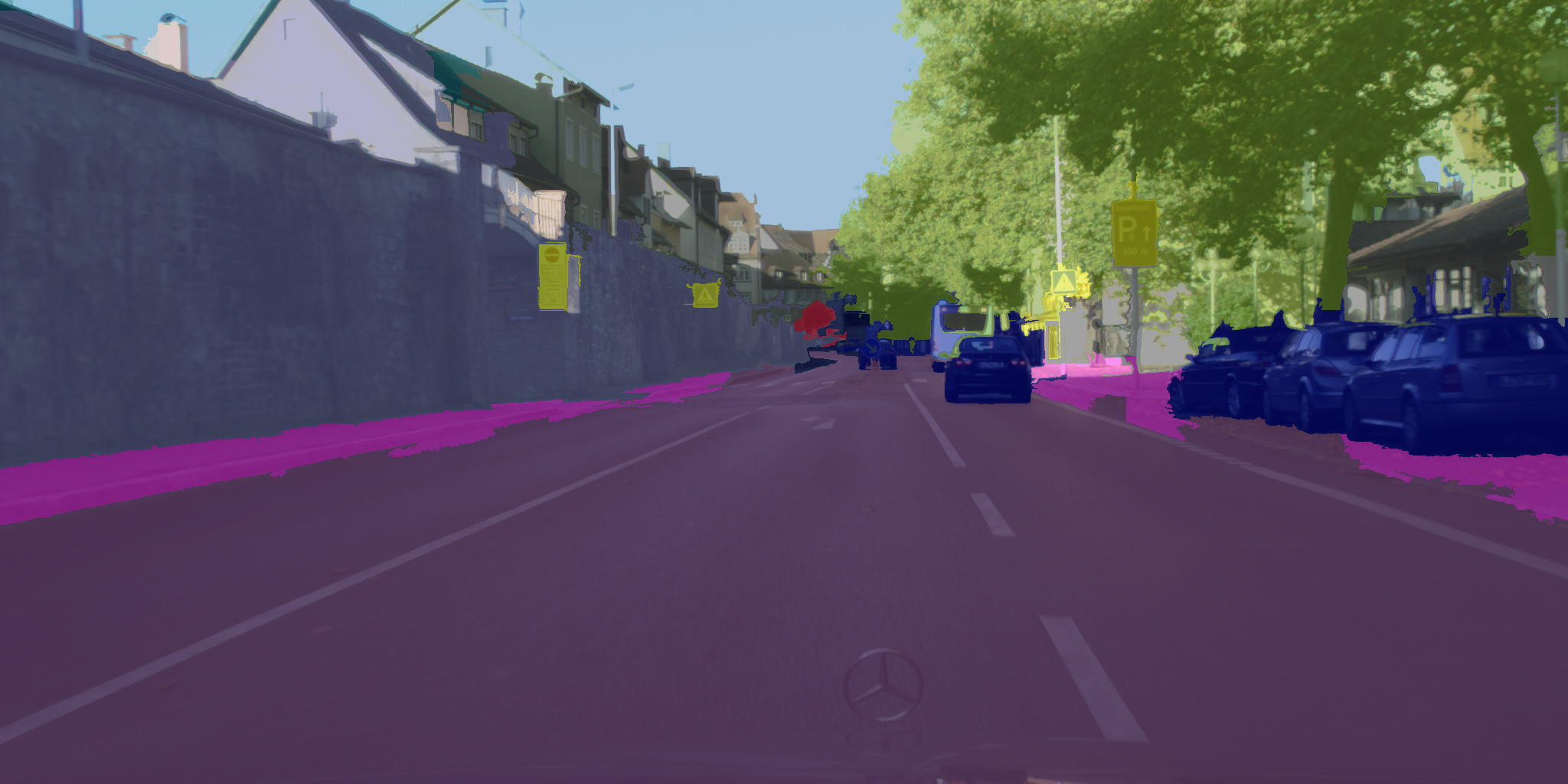} \\
			(b)  \dars  
			\end{minipage}
		\end{tabular}
	\end{minipage}%
	\caption{Snapshots of the multi-label semantic segmentation computed by the proposed MRF solvers
	(a) \fuses and (b) \dars on the Cityscapes dataset. \fuses is able to 
 	 segment an image in 16ms (1000 superpixels).
	 \label{fig-segmentation-snapshots}}
	\vspace{-8mm} 
	\end{center}
\end{figure}


\myparagraph{Contribution} 
Our first contribution, presented in Section~\ref{sec:dars}, is to design a dual-ascent-based method to solve standard SDP relaxations that 
takes advantage of the geometric structure of the problem to speed up computation. 
In particular, we show that each dual ascent iteration can be solved using a fast low-rank SDP solver known as the 
\emph{Riemannian Staircase}~\cite{Boumal16nips}.
This technique, named \emph{Dual Ascent Riemannian Staircase} (\dars), is able to solve MRF instances with thousands of variables \emph{in seconds}, 
while general-purpose SDP solvers (e.g., \cvx~\cite{CVXwebsite})  are not able to provide an answer in reasonable time (hours) at that scale.

Our second contribution, presented in Section~\ref{sec:fuses}, 
\change{is an even faster SDP relaxation.
Despite being  remarkably faster than general-purpose SDP solvers,
 \dars is currently slow for real-world robotics applications, hence, we develop a \emph{Fast Unconstrained SEmidefinite Solver} (\fuses) that can solve large problems \emph{in milliseconds}.}{is a \emph{Fast Unconstrained SEmidefinite Solver} (\fuses) that can solve large problems \emph{in milliseconds}.} 
The backbone of this second approach is a novel SDP relaxation combined with the Riemannian Staircase method~\cite{Boumal16nips}. 
The novel formulation uses a more intuitive binary matrix (with entries in $\{0,1\}$), 
contrarily to related work that parametrizes the problem using a vector with entries in $\{-1,+1\}$. 
\change{}{\fuses does not require an initial guess for optimization (i.e., it is a global solver) and provides per-instance sub-optimality guarantees.}  

Our third contribution is an extensive experimental evaluation. We test the proposed SDP solvers in semantic image segmentation problems 
and evaluate the corresponding results in terms of accuracy and runtime. 
We compare the proposed techniques against several related approaches, including move-making methods (\emph{$\alpha$-expansion}~\cite{Boykov01pami-graphCut}) 
and message passing 
(\emph{Loopy Belief Propagation}~\cite{Weiss01itis-beliefPropagation} 
and \emph{Tree-Reweighted Message Passing}~\cite{Wainwright05itis}).
\change{The results show that our MRF solver retains all the advantages of SDP relaxations (accuracy, no need for initial guess), 
while being fast and scalable.}{ 
Upon convergence, \dars attains the same solution of standard SDP relaxations. \fuses, on the other hand, trades-off inference time for a mild loss in accuracy.} 
More specifically, our results show that
(i) \fuses and \dars produce near-optimal solutions, attaining an objective within \percSubopt of the optimum,
(ii) \fuses and \dars are remarkably faster than general-purpose SDP solvers (e.g., \CVX~\cite{CVXwebsite}), and
  \fuses is more than two orders of magnitude faster than \dars while attaining similar solution quality,
(iii) \change{}{\fuses is more than 2x faster than local search methods while being a global solver.}

\change{}{While the evaluation in this paper focuses on the MRF solver (rather than attempting to outperform state-of-the-art deep learning methods for semantic segmentation), we believe \fuses can be used \emph{in conjunction} with existing deep learning methods, as done in~\cite{Chen18pami-deepLab}, to refine the segmentation results. For this purpose, we released our implementation online at \url{https://github.mit.edu/SPARK/sdpSegmentation}.}

Before delving into the contribution of this paper,
Section~\ref{sec:preliminaries} provides preliminary notions on inference in MRFs, 
while we postpone the review of related work to Section~\ref{sec:relatedWork}. 
\isExtended{}{All proofs  are given in the supplemental material~\cite{Hu18tr-fuses}\change{.}{, together with extra experimental results and a more extensive literature review.}}

\section{Preliminaries}
\label{sec:preliminaries}

This section introduces standard notation for MRFs (\prettyref{sec:preMRF}) and provides necessary 
background on semidefinite relaxations (\prettyref{sec:standardSDPrelax}). 

\subsection{Markov Random Fields: Models and Inference}
\label{sec:preMRF}

A \emph{Markov Random Field} (MRF) is a graphical model in which \emph{nodes} are associated with  discrete labels we want to estimate,
and \emph{edges} (or \emph{potentials}) represent given probabilistic constraints relating the labels of a subset of nodes.
Formally, for each node $i$ in the node set $\nodeSet \doteq \{1,\ldots,\nrNodes\}$ 
(where $\nrNodes$ is the number of nodes), we need to assign a label $\class_i \in \classSet$, 
where $\classSet \doteq \{1,\ldots,\nrClasses\}$ is the set of $\nrClasses$  possible labels. 
  If $\nrClasses = 2$ (i.e., nodes are classified into two classes) the corresponding model is called a \emph{binary} MRF.
   Here we consider $\nrClasses \geq 2$ possible labels, a setup  generally referred to as a \emph{multi-label} MRF. 

\myparagraph{Maximum a posteriori (MAP) inference}
The MAP estimate is the most likely assignment of labels\isExtended{, i.e., the assignment
 of the node labels 
that attains the maximum of the posterior distribution of an MRF, or, equivalently, the minimum of the negative log-posterior.}{ in the MRF.}
MAP estimation can be formulated as a discrete optimization problem over the labels $\class_i  \in \classSet$ 
with $i=1,\ldots,\nrNodes$~\cite{Szeliski08pami-surveyMRF}:
\beq
\label{eq:MRF1}
\min_{\substack{\class_i \in \classSet \\ i = 1,\ldots,\nrNodes}}  \;\; 
\sum_{i \in \unarySet} E_i(\class_i) + \sum_{(i,j)\in\binarySet} E_{ij}(\class_i,\class_j)
\tag{\Pmrf}
\eeq
where $\unarySet \subseteq \nodeSet$ is the set of \emph{unary potentials} (probabilistic constraints involving a single node),
$\binarySet \subseteq \nodeSet \times \nodeSet$ is the set of \emph{binary potentials} (involving a pair of nodes), 
and $E_i(\cdot)$ and $E_{ij}(\cdot)$ represent the negative log-distribution 
for each unary and binary potential, respectively (described below).
 For instance, in semantic segmentation each node in the MRF corresponds to a pixel (or superpixel) in the image, the unary potentials are dictated by pixel-wise classification from a classifier applied to the image, and the binary potentials enforce smoothness of the resulting 
segmentation~\cite{Zhu16jcvir}. The binary potentials (often referred to as \emph{smoothness priors}) are typically enforced between 
nearby (adjacent) pixels. 

\myparagraph{MRF Potentials}
A typical form for the unary and binary potentials is:

\vspace{-0.7cm}
\begin{equation}
\begin{split}
\label{eq:binary}
E_i(\class_i) &= 
\left\{
\begin{array}{ll}
0 & \text{ if } \class_i = \measuredClass_i  \\
\penaltyTerm_{i} & \text{otherwise}
\end{array}	
\right. 
\\
E_{ij}(\class_i,\class_j) &= 
\left\{
\begin{array}{ll}
0 & \text{ if } \class_i = \class_j  \\
\penaltyTerm_{ij} & \text{otherwise}
\end{array}	
\right.
\end{split}
\end{equation}
where $\measuredClass_i$ is a data-driven noisy measurement of the label of node $i$ (typically from a classifier), and $\penaltyTerm_{i}$ 
and $\penaltyTerm_{ij}$ are given scalars. 
Typically, it is assumed 
$\penaltyTerm_{i} \geq 0$, i.e., choosing a label different from the measured one incurs a cost $\penaltyTerm_{i}$ in~\eqref{eq:MRF1}. 
%
%
%
Similarly, for the binary potentials $E_{ij}(\cdot)$ it is typically assumed 
$\penaltyTerm_{ij} \geq 0$, i.e., label mismatch ($\class_i \neq \class_j$) incurs a cost of $\penaltyTerm_{ij}$ in the objective~\eqref{eq:MRF1}. 
\isExtended{
In this case the binary potentials are called \emph{attractive}, while they are referred to as \emph{repulsive} when $\penaltyTerm_{ij} < 0$ 
(i.e., the potentials encourage label mismatches)~\cite{Gallagher11cvpr}.}{}
\omitted{
The assumptions $\penaltyTerm_{i} \geq 0$, $\penaltyTerm_{ij} \geq 0$ (for all $i \in \calV$ and $(i,j) \in \calE$) are 
 necessary for some related techniques to work (see Section~\ref{sec:relatedWork}), 
but are not required for our method, hence here we do not take any assumption on $\penaltyTerm_{i}$ and $\penaltyTerm_{ij}$.
}

The MRF resulting from the choice of potentials in eq.~\eqref{eq:binary} 
is known as the \emph{Potts model}\isExtended{~\cite{Potts52-pottsModel}, which was first proposed
 in statistical mechanics to model interacting spins in ferromagnetic materials.}{.}
 When $\nrClasses\!=\!2$ (binary MRFs) the resulting model is known as the \emph{Ising model}~\cite[Section 1.4.1]{Blake11book-MRF}.
 \omitted{
 This classification is important since some specialized versions of the 
 inference problem~\eqref{eq:MRF1} are exactly solvable in polynomial time (Section~\ref{sec:relatedWork}), while~\eqref{eq:MRF1} is intractable in general and one 
 needs to resort to approximations. 

\myparagraph{MRFs and CRFs} \emph{Conditional Random Fields} (CRFs) are another widespread model for 
segmentation problems. CRFs can be thought as a special case of MRFs, where there are no pure priors, but only 
data driven terms~\cite{Blake11book-MRF}. 
Let us consider for instance a depth estimation problem where we want to estimate the 
depth $\classes$ at each pixel in an image given a set of noisy depth measurements 
$\measured$ (data) and assuming that depth changes smoothly across nearby pixels (prior). 
In this case the model includes both data-driven potentials $\prob{\measured | \classes}$ and smoothness priors $\prob{\classes}$, and the maximum-a-posteriori estimate
becomes:
\beq
\label{eq:MAP-MRF}
\classesOpt_{\text{MRF}} = \argmax_{\classes \in \classSet^\nrNodes} \prob{\measured | \classes}  \prob{\classes} 
\eeq
The presence of the prior and the fact that priors and likelihood terms can be potentialed into binary and unary terms, respectively, makes the previous model a MRF.

In several applications, however, the binary terms are also data-dependent, rather than being pure priors. For instance, in semantic segmentation problems the expression of a binary potential connecting  two nearby pixels depends on the intensity mismatch between the two pixels, such that similarly-looking pixels are encouraged to belong to the same class. Therefore, both unary and binary potentials are data-driven and the corresponding maximum-likelihood estimate becomes:
\beq
\label{eq:ML-CRF}
\classesOpt_{\text{CRF}} = \argmax_{\classes \in \classSet^\nrNodes} \prob{\measured | \classes}
\eeq
which is now a CRF, due to the absence of pure priors. 

We remark that for typical choices of potentials the optimization problems resulting from inference on MRFs and CRFs have the same form, given in eq.~\eqref{eq:MRF1}, and for this reason the two terms are often used interchangeably in the literature. As a consequence, the approach proposed in this paper applies to both MRFs and CRFs.
}

\subsection{Standard Semidefinite Relaxation}\label{sec:standardSDPrelax}
Semidefinite programming (SDP) relaxation has been shown to provide an effective approach to compute a 
good approximation of the global minimizer of~\eqref{eq:MRF1}~\cite{Keuchel03pami,Keuchel04dagm,Olsson08cviu}.
In this section we introduce a standard approach to obtain an SDP relaxation, for which we design a fast solver in Section~\ref{sec:dars}. 

In order to obtain an SDP relaxation, related works rewrite each node variable $\class_i \in \classSet \doteq \{1,\ldots,\nrClasses\}$ as a vector 
$\Classesmp_i \in \{-1,+1\}^\nrClasses$, such that $\Classesmp_i$ has a single entry equal to $+1$ (all the others are $-1$), and if the $j$-th entry of $\Classesmp_i$ is $+1$, then the corresponding node has label $j$. 
Moreover, they stack all vectors $\Classesmp_i$, $i =1,\ldots,\nrNodes$, in a single 
$\nrNodes\nrClasses$-vector $\Classesmp = [\Classesmp_1\tran \; \Classesmp_2\tran \; \ldots \; \Classesmp_\nrNodes\tran]\tran$. 
Using this reparametrization, the inference problem~\eqref{eq:MRF1} can be written in terms of the 
 vector $\Classesmp$ as follows (full derivation in~\isExtended{Appendix A}{\cite[Appendix A]{Hu18tr-fuses}}): 
\beq
\begin{array}{rl}
\label{eq:MRFmp2}
\min_{\Classesmp}  & \Classesmp\tran \MA \Classesmp + 2 \vb\tran \Classesmp
\\
\subject & \diag{\Classesmp\Classesmp\tran} = \ones_{\nrNodes\nrClasses}, 
\\
         & \vu_i\tran \Classesmp = \cumsum, \quad i=1,\ldots,\nrNodes   
\end{array}
\eeq
where $\MA$ and $\vb$ are a suitable symmetric matrix and a suitable vector collecting the coefficients of the 
binary terms and the unary terms in~\eqref{eq:binary}, respectively;
$\diag{\Classesmp\Classesmp\tran}$ is the diagonal of the matrix $\Classesmp\Classesmp\tran$, 
and $\vu_i \doteq \ve_i\tran \kron \ones_\nrClasses\tran$, where $\ve_i$ is an $\nrNodes$-vector which is 
all zero, except the $i$-th entry which is one, $\ones_\nrClasses$ is a $\nrClasses$-vector of ones, 
and $\kron$ is the Kronecker product.
 Intuitively, $\diag{\Classesmp\Classesmp\tran}$ contains
 the square of each entry of $\Classesmp$, hence $\diag{\Classesmp\Classesmp\tran} = \ones_{\nrNodes}$ imposes 
 that every entry of $\Classesmp$ has norm $1$, i.e., it belongs to $\{-1,+1\}$; 
  the constraint $\vu_i\tran \Classesmp = \cumsum$ writes in compact form 
  $\ones\tran \Classesmp_i = \cumsum$, which enforces each node to have a unique label (i.e., a 
  single entry in $\Classesmp_i$ can be $+1$, while all the others are $-1$). 

Before relaxing problem~\eqref{eq:MRFmp2}, it is convenient to \emph{homogenize} the objective by reparametrizing the 
problem in terms of an extended vector $\vy \doteq [\Classesmp\tran 1]\tran$, where an entry equal to $1$ is concatenated to $\Classesmp$.
We can now rewrite~\eqref{eq:MRFmp2} in terms of $\vy$:
\beq
\begin{array}{rrl}
\label{eq:MRFmp3}
\foptmp =& \min_{\vy}  & \trace{\ML \vy\vy\tran} 
\\
&\subject & \diag{\vy\vy\tran} = \ones_{\nrNodes\nrClasses+1}  \\
 &        & \trace{\MU_i \vy\vy\tran} = \cumsum, \quad i=1,\ldots,\nrNodes
\end{array}
\tag{\Pmp}
\eeq
where $\ML \doteq {\scriptsize \matTwo{\MA  & \vb \\ \vb\tran & 0}}$ 
and $\MU_i \doteq {\scriptsize \matTwo{\zero & \zero \\ \vu_i\tran & 0}}$.
In~\eqref{eq:MRFmp3}, we used the equality $\vy \tran \ML \vy = \trace{\ML \vy\vy\tran}$, 
and noted that since $\vy\vy\tran = {\scriptsize \matTwo{\Classesmp\Classesmp\tran  & \Classesmp \\ \Classesmp\tran & 1}}$, 
then  $\trace{\MU_i \vy\vy\tran} = \vu_i\tran \Classesmp$.

So far we have only reparametrized problem~\eqref{eq:MRF1}, hence~\eqref{eq:MRFmp3} is still a MAP estimator. 
We can now introduce the SDP relaxation: 
 problem~\eqref{eq:MRFmp3} 
only includes terms in the form $\vy\vy\tran$, hence we can reparametrize it using a matrix  
$\MY \doteq \vy\vy\tran$. Moreover, we note that the set of matrices $\MY$ that satisfy $\MY \doteq \vy\vy\tran$ 
is the set of positive semidefinite ($\MY\succeq 0$)  rank-1 matrices ($\rank{\MY}=1$).
Rewriting~\eqref{eq:MRFmp3} using $\MY$ and dropping the 
non-convex rank-1 constraint, we obtain:
\beq
\begin{array}{rrl}
\label{eq:SDPstandard}
\fsdpmp =& \min_{\MY}  & \trace{\ML \MY} 
\\
&\subject & \diag{\MY} = \ones_{\nrNodes\nrClasses+1}  
\\
&         & \trace{\MU_i \MY} = \cumsum, \quad i=1,\ldots,\nrNodes
\\
&         & \MY\succeq 0
\end{array}
\tag{\Smp}
\eeq
which is a (convex) semidefinite program and can be solved globally in polynomial time using interior-point methods\isExtended{~\cite{Boyd04book}. While the SDP relaxation~\eqref{eq:SDPstandard} is known to provide near-optimal approximations of the MAP estimate, 
interior-point 
methods are typically slow in practice and cannot solve problems with more than few hundred nodes in a reasonable time.}{.
Unfortunately, interior-point 
methods scale poorly in the dimension of the SDP, and are not able to solve problems with more than few hundred nodes in a reasonable time.}  


\section{\dars: Dual Ascent Riemannian Staircase}
\label{sec:dars}

This section presents the first contribution of this paper: a dual ascent approach to efficiently solve large instances of 
the standard SDP relaxation~\eqref{eq:SDPstandard}. 
\omitted{ We first introduce our dual ascent strategy in Section~\ref{sec:dars-ascent};
we then introduce the Riemannian Staircase to efficiently solve each dual ascent iteration in Section~\ref{sec:dars-staircase}.}

\subsection{Dual Ascent Approach}
\label{sec:dars-ascent}

The main goal of this section is to design a dual ascent method, where the subproblem to be solved at each iteration has 
a more favorable geometry, and can be solved quickly using the 
 Riemannian Staircase method introduced in Section~\ref{sec:dars-staircase}.
Towards this goal, we 
 rewrite~\eqref{eq:SDPstandard} equivalently as:
\beq
\begin{array}{rl}
\label{eq:DA1}
\min_{\MY}  & g(\MY)
\\
\subject & \trace{\MU_i \MY} = \cumsum, \quad i=1,\ldots,\nrNodes
\end{array}
\eeq
where the objective function is now $g(\MY) \doteq \trace{\ML \MY} + \calI(\diag{\MY} = \ones_{\nrNodes\nrClasses+1}) + \calI(\MY\succeq 0)$, where
$\calI(\cdot)$ is the indicator function which is zero when the constraint inside the parenthesis is satisfied and plus infinity otherwise. 

Under constraints qualification (e.g., the Slater's condition for convex programs~\cite[Theorem 3.1]{Vandenberghe96siam}), 
we can obtain an optimal solution to~\eqref{eq:DA1} by computing a saddle-point of the Lagrangian function $\calL(\MY,\vlambda)$:
\beq
\label{eq:DA2}
\max_{\vlambda} \inf_{\MY}  \calL(\MY,\vlambda) = 
\max_{\vlambda} \inf_{\MY}  g(\MY) + \textstyle\sum_{i=1}^{\nrNodes} \vlambda_i (\trace{\MU_i \MY} + \KpTwo)
\eeq
where $\vlambda \in \Real{\nrNodes}$ is the vector of \emph{dual variables} and $\MY$ is the \emph{primal} variable.

The basic idea behind dual ascent~\cite[Section 2.1]{Boyd10fnt} is to solve the saddle-point problem~\eqref{eq:DA2} by alternating maximization steps 
with respect to the dual variables $\vlambda$ and minimization steps with respect to the primal variable $\MY$.

\myparagraph{Dual Maximization} 
The maximization of the dual variable is carried out via gradient ascent. In particular, at each iteration $t = 1,\ldots,T$ ($T$ is the maximum number 
of iterations), the dual 
ascent method fixes the primal variable and updates the dual variable $\vlambda$ as:
\beq
\label{eq:gradAscent}
\vlambda\at{t} = \vlambda\at{t-1} + \alpha \nabla \calL_\vlambda(\MY\at{t-1},\vlambda\at{t-1})
\eeq
where $\nabla \calL_\vlambda(\MY\at{t-1},\vlambda\at{t-1})$ is the gradient of the Lagrangian with respect to the dual variables, 
evaluated at the latest estimate of the primal-dual variables $(\MY\at{t-1},\vlambda\at{t-1})$, and 
$\alpha$ is a suitable stepsize. It is straightforward to compute the gradient with respect to the $i$-th dual variable as 
$\nabla \calL_{\vlambda_i}(\MY,\vlambda) = \trace{\MU_i \MY} +\KpTwo$. 
\isExtended{Intuitively, the second summand in~\eqref{eq:DA2} penalizes 
the violation of the constraint $\trace{\MU_i \MY} = \cumsum$ (for all $i$). Moreover, 
since the gradient in~\eqref{eq:gradAscent} grows with the amount of violation $\trace{\MU_i \MY} - \nrClasses + 2$,
the dual update~\eqref{eq:gradAscent} 
increases the penalty for constraints with large violation.}{}

\myparagraph{Primal Minimization}
The minimization step fixes the dual variable to the latest estimate $\vlambda\at{t-1}$ and minimizes~\eqref{eq:DA2} 
with respect to the primal variable $\MY$: 
\beq
\label{eq:DA3}
\min_{\MY}  g(\MY) + \textstyle \sum_{i=1}^{\nrNodes} \vlambda\at{t-1}_i (\trace{\MU_i \MY} + \KpTwo)
\eeq
where we substituted ``$\inf$'' for ``$\min$'' since the objective cannot drift to minus infinity due to the 
implicit constraints imposed by the indicator 
functions in $g(\MY)$. 
Recalling the expression of $g(\MY)$, defining 
$\ML_{\lambda} \doteq \ML + \sum_{i=1}^{\nrNodes} \vlambda\at{t-1}_i \MU_i$, 
and moving again the  indicator functions to the constraints we write~\eqref{eq:DA3} more explicitly as:
%
\beq
\begin{array}{rrl}
\label{eq:primalDescent}
\MY\at{t} = & \argmin_{\MY}  & \trace{\ML_{\lambda} \MY}\\
& \subject & \diag{\MY} = \ones_{\nrNodes\nrClasses+1}  \\
&         & \MY\succeq 0
\end{array}
\eeq
where we dropped the constant terms $\sum_{i=1}^{\nrNodes} \vlambda\at{t-1}_i (\KpTwo)$ from the objective 
since they are irrelevant for the 
optimization.
The minimization step in the dual ascent is again an SDP, but contrarily to the standard SDP~\eqref{eq:SDPstandard}, 
problem~\eqref{eq:primalDescent} can be solved quickly using the Riemannian Staircase, as discussed in the following.

\subsection{A Riemannian Staircase for the Dual Ascent Iterations}
\label{sec:dars-staircase}

This section provides  a fast solver to compute a solution for the SDP~\eqref{eq:primalDescent}, that needs to be solved 
at each iteration of the dual ascent method of Section~\ref{sec:dars-ascent}.
\omitted{ While the dual maximization~\eqref{eq:gradAscent} is inexpensive 
and only requires computing the gradients for each dual variable $i=1,\ldots,\nrNodes$, developing a fast solver for~\eqref{eq:primalDescent} 
is crucial to make the dual ascent method practical.}


We use of the Burer-Monteiro method~\cite{Burer03mp}, which replaces the matrix $\MY \in \Real{(\nrNodes\nrClasses+1) \times (\nrNodes\nrClasses+1)}$ in~\eqref{eq:primalDescent} with a rank-$r$ product $\MR \MR\tran$ with $\MR \in \Real{(\nrNodes\nrClasses+1) \times r}$: 
\beq
\label{eq:RRT1}
\begin{array}{rcl}
& \min_{\MR} & \trace{\ML_{\lambda} \MR \MR\tran}  \\
&\subject   &  \diag{\MR \MR\tran} = \ones_{\nrNodes\nrClasses+1}
\end{array}
\eeq
Note that the constraint $\MY \succeq 0$ in~\eqref{eq:primalDescent}  becomes redundant after the substitution, since $\MR \MR\tran$ is always positive semidefinite, hence it is dropped. 

Following Boumal\setal~\cite{Boumal16nips} we note that the constraint set in~\eqref{eq:RRT1} describes a smooth manifold, and in particular a product of Stiefel manifolds. To make this apparent, we recall that 
the (transposed) \textit{Stiefel manifold} is defined as~\cite{Boumal16nips}:
\beq
\Stiefel{d}{p} = \{\MM\in\Real{d\times p}\colon \MM \MM\tran = \eye_d\}
\eeq
Then, we observe that 
$\diag{\MR \MR\tran} = \ones_{\nrNodes\nrClasses+1}$ can be written as 
$\MR_i \MR_i\tran = 1$, $i=1,\ldots,\nrNodes\nrClasses+1$ (where $\MR_i$ is the $i$-th row of $\MR$), which is equivalent to saying that 
$\MR_i \in \Stiefel{1}{r}$ for $i=1,\ldots,\nrNodes\nrClasses+1$. 
This observation allows concluding that the matrix $\MR$ 
belongs to the product manifold $\Stiefel{1}{r}^{\nrNodes\nrClasses+1}$.
%
Therefore, we can rewrite~\eqref{eq:RRT1} as an unconstrained optimization on manifold: 
\beq
\label{eq:RRT2}
\min_{\MR \in \Stiefel{1}{r}^{\nrNodes\nrClasses+1}} \trace{\ML_{\lambda}  \MR \MR\tran} 
\tag{\Rmp}
\eeq
The formulation~\eqref{eq:RRT2} is non-convex (the product of Stiefel manifolds describes a non-convex set), but one can find \emph{local} minima efficiently using iterative methods~\cite{Boumal16nips,Rosen16wafr-sesync}.
While it might seem that little was gained 
(we started with an intractable problem and we ended up with another non-convex problem), the following remarkable result from 
 Boumal\setal~\cite{Boumal16nips} ties back \emph{local solutions} of~\eqref{eq:RRT2} to \emph{globally optimal solutions} of the SDP~\eqref{eq:primalDescent}.

 \begin{proposition}[Optimality Conditions for~\eqref{eq:RRT2}, Corollary 8 in~\cite{Boumal16nips}]
 \label{prop:boumal}
 If $\MR  \in \Stiefel{1}{r}^{\nrNodes\nrClasses+1}$ is a (column) rank-deficient second-order critical point of problem~\eqref{eq:RRT2}, then $\MR$ is a global optimizer of~\eqref{eq:RRT2}, and 
 $\MY^\star \doteq \MR\MR\tran$ is a solution of the semidefinite relaxation~\eqref{eq:primalDescent}.
 \end{proposition}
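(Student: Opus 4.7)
The plan is to construct a dual certificate for the SDP~\eqref{eq:primalDescent} directly from the second-order critical point $\MR$, and then invoke strong duality. Since~\eqref{eq:primalDescent} is convex and Slater's condition holds (e.g., $\MY = \eye$ is strictly feasible), it suffices to exhibit multipliers $\vmu^\star \in \reals^{\nrNodes\nrClasses+1}$ and a dual slack $\MS^\star \succeq 0$ that satisfy Lagrangian stationarity $\MS^\star = \ML_\lambda - \diag{\vmu^\star}$ and complementary slackness $\MS^\star \MY^\star = 0$, where $\MY^\star \doteq \MR\MR\tran$. Moreover, any feasible $\MR$ for~\eqref{eq:RRT2} yields a feasible $\MR\MR\tran$ for~\eqref{eq:primalDescent} with equal objective value, so once $\MY^\star$ is certified SDP-optimal, weak duality will automatically imply that $\MR$ is a global optimizer of~\eqref{eq:RRT2}.

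First I would derive the first-order Riemannian optimality condition. The tangent space to $\Stiefel{1}{r}^{\nrNodes\nrClasses+1}$ at $\MR$ consists of matrices $\dot\MR$ whose $i$-th row is orthogonal to $\MR_i$. Projecting the Euclidean gradient $2\ML_\lambda \MR$ onto this tangent space and setting it to zero produces a diagonal matrix $\diag{\vmu^\star}$, with entries $\mu_i^\star \doteq (\ML_\lambda \MR\MR\tran)_{ii}$, that satisfies $(\ML_\lambda - \diag{\vmu^\star})\MR = \zero$. Defining $\MS^\star \doteq \ML_\lambda - \diag{\vmu^\star}$, this simultaneously yields Lagrangian stationarity and complementary slackness (since $\MS^\star \MR\MR\tran = \zero$). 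All that remains is to prove $\MS^\star \succeq 0$.

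The main obstacle — and the place where both second-order criticality and the column rank deficiency of $\MR$ are used — is the positive semidefiniteness of $\MS^\star$. Column rank deficiency guarantees the existence of $\vv \in \reals^r \setminus \{\zero\}$ with $\MR\vv = \zero$. For an arbitrary $\vxx \in \reals^{\nrNodes\nrClasses+1}$, I would consider the rank-one perturbation $\dot\MR \doteq \vxx\vv\tran$: its $i$-th row equals $\vxx_i \vv\tran$, which is orthogonal to $\MR_i$ because $\vv\tran \MR_i\tran = \MR_i \vv = 0$, so $\dot\MR$ is a legitimate tangent vector at $\MR$. A direct computation of the Riemannian Hessian of the objective (which at a critical point coincides with the Hessian of the Lagrangian restricted to the tangent space) then gives
\[
\mathrm{Hess}\,f[\dot\MR,\dot\MR] \;=\; 2\trace{\MS^\star \dot\MR\dot\MR\tran} \;=\; 2\|\vv\|^2\, \vxx\tran \MS^\star \vxx .
\]
Second-order criticality forces the left-hand side to be nonnegative for every tangent direction, hence $\vxx\tran \MS^\star \vxx \geq 0$ for \emph{every} $\vxx$, i.e., $\MS^\star \succeq 0$.

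With $\MS^\star \succeq 0$, $\MS^\star \MY^\star = \zero$, and Lagrangian stationarity in hand, $(\MY^\star, \vmu^\star, \MS^\star)$ is a valid KKT triple for the convex SDP~\eqref{eq:primalDescent}, so $\MY^\star = \MR\MR\tran$ is globally optimal; by the weak-duality argument above, $\MR$ is then a global minimizer of~\eqref{eq:RRT2}. The heart of the proof is the choice of the rank-one tangent perturbation $\vxx\vv\tran$, which is the key geometric device that converts a \emph{local} second-order condition on the non-convex problem~\eqref{eq:RRT2} into \emph{global} positive semidefiniteness of the dual slack $\MS^\star$; the remaining steps are essentially bookkeeping on the KKT system.
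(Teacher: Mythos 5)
The paper does not prove this proposition at all --- it is imported verbatim as Corollary~8 of Boumal~\emph{et~al.}, so there is no in-paper argument to compare against. Your reconstruction is correct and is essentially the standard dual-certificate proof underlying that cited result: the first-order Riemannian condition yields the slack matrix $\MS^\star = \ML_{\lambda} - \diag{\vmu^\star}$ with $\MS^\star\MR = \zero$, the rank-one tangent perturbation $\vxx\vv\tran$ built from a vector in the kernel of $\MR$ converts second-order criticality into $\MS^\star \succeq 0$, and KKT plus weak duality finish the job. All the individual steps check out (the perturbation is indeed tangent since $\MR_i\vv$ is the $i$-th entry of $\MR\vv = \zero$, and $\MY=\eye$ does certify Slater), so this is a faithful proof of the quoted statement rather than a new route.
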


The previous proposition ensures that when
local solutions (second-order critical points) of~\eqref{eq:RRT2} are rank deficient, then they can be 
mapped back to global solutions of~\eqref{eq:primalDescent}, 
hence providing a way to solve~\eqref{eq:primalDescent} efficiently via~\eqref{eq:RRT2}.

 The catch is that one has to choose the rank $r$ large enough to obtain rank-deficient solutions. 
Related work~\cite{Boumal16nips} therefore proposes the \emph{Riemannian staircase} method, where one solves~\eqref{eq:RRT2} for increasing values of $r$ till a rank-deficient solution is found. 
Boumal\setal~\cite{Boumal16nips} also provide theoretical results
ensuring that rank-deficient solutions are found for small $r$ (more details in Section~\ref{sec:experiments}). 

\subsection{\dars: Summary, Convergence, and Guarantees}
\label{sec:dars-guarantees}

We name \dars (\emph{Dual Ascent Riemannian Staircase}) the approach resulting from the combination of dual ascent and the Riemannian Staircase.
\dars starts with an initial guess for the dual variables (we use $\vlambda = \zero_{\nrNodes}$), and then alternates two steps: 
(i) the \emph{primal minimization} where a solution for~\eqref{eq:primalDescent} is obtained using the Riemannian Staircase~\eqref{eq:RRT2} 
(in practice this is solved using iterative methods, such as the Truncated Newton method);
(ii) the \emph{dual maximization} were the dual variables are updated using the gradient ascent update~\eqref{eq:gradAscent}.

\myparagraph{Rounding} 
 Upon convergence, \dars produces a matrix $\MY^\star = \MR\MR\tran$. 
When deriving the standard SDP relaxation~\eqref{eq:SDPstandard} we dropped the rank-1 constraint, hence $\MY^\star$
 cannot be written in general as $\MY^\star = \hat{\vy}\hat{\vy}^\star$. 
 The process of computing a feasible solution $\hat{\vy}$ for the original problem~\eqref{eq:MRFmp3}  is called \emph{rounding}.
 A standard approach for rounding consists in computing a rank-1 approximation of $\MY$ (which can be done via singular value decomposition) 
 and rounding the entries of the resulting vector in $\{-1;+1\}$. 
We refer to  $\hat{\vy}$ as the \emph{rounded estimate} and we call $\froundedmp$ the objective value attained by $\hat{\vy}$ in~\eqref{eq:MRFmp3}.

\myparagraph{Convergence} While dual ascent is a popular optimization technique, few convergence results are available in the literature. For instance, 
dual ascent is known to converge when the original objective is strictly convex~\cite{Tseng90siam}. 
Currently, 
we observe that \dars converges when the stepsize $\alpha$ in~\eqref{eq:gradAscent} is sufficiently small. We prove the following per-instance performance guarantees.
\begin{proposition}[Guarantees in \dars]
\label{prop:dars:guarantees}
If the dual ascent iterations converge to a value $\vlambda^\star$ (i.e., the dual iterations reach a solution where the gradient in~\eqref{eq:gradAscent} is zero) then the following properties hold:
\begin{itemize}
	\item let $\MR^\star$ be a (column) rank-deficient second-order critical point of problem~\eqref{eq:RRT2} 
	with $\ML_{\lambda} \doteq \ML + \sum_{i=1}^{\nrNodes} \vlambda^\star_i \MU_i$, 
	then the matrix $\MY^\star \doteq (\MR^\star)(\MR^\star)\tran$ is an optimal solution for the standard SDP relaxation~\eqref{eq:SDPstandard};
	\item let $\fsdpmp$ be the (optimal) objective value attained by $\MY^\star$ in the standard SDP relaxation~\eqref{eq:SDPstandard}, 
	$\foptmp$ be the optimal objective of~\eqref{eq:MRFmp3}, and $\froundedmp$ the objective attained by the rounded solution $\hat{\vy}$, then it holds $\froundedmp - \foptmp \leq \froundedmp - \fsdpmp$.
\end{itemize}
\end{proposition}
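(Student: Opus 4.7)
The plan is to prove the two bullets separately, leveraging Proposition~\ref{prop:boumal} for the first and standard relaxation bounds for the second.

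For the first bullet, I would start by observing that the optimization problem~\eqref{eq:primalDescent} solved at each primal minimization step \emph{coincides} with the standard SDP~\eqref{eq:SDPstandard} except for the linear equality constraints $\trace{\MU_i \MY} = \cumsum$, which have been moved into the Lagrangian via the dual variables $\vlambda$. By Proposition~\ref{prop:boumal}, if $\MR^\star$ is a rank-deficient second-order critical point of~\eqref{eq:RRT2} with cost matrix $\ML_\lambda$ built from $\vlambda^\star$, then $\MY^\star \doteq \MR^\star(\MR^\star)\tran$ is a global minimizer of~\eqref{eq:primalDescent}, hence it is a global minimizer of the Lagrangian relaxation $\inf_{\MY} \calL(\MY,\vlambda^\star)$ appearing in~\eqref{eq:DA2}. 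Next, since the dual iterations are assumed to have converged, the gradient condition $\nabla \calL_{\vlambda_i}(\MY^\star,\vlambda^\star) = \trace{\MU_i \MY^\star} + \KpTwo = 0$ holds for every $i$, which is precisely the equality constraint $\trace{\MU_i \MY^\star} = \cumsum$. Combined with the fact that $\MY^\star \succeq 0$ and $\diag{\MY^\star} = \ones$ (which are already enforced in~\eqref{eq:primalDescent}), this shows that $\MY^\star$ is feasible for~\eqref{eq:SDPstandard}. Together, these two facts imply that $(\MY^\star, \vlambda^\star)$ is a saddle point of $\calL$. Since~\eqref{eq:SDPstandard} is a convex program satisfying Slater's condition (as assumed in the derivation of~\eqref{eq:DA2}), strong duality holds, so any primal--dual saddle point of the Lagrangian consists of a primal optimal $\MY^\star$ and a dual optimal $\vlambda^\star$; thus $\MY^\star$ is optimal for~\eqref{eq:SDPstandard}.

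For the second bullet, the argument is a one-line sandwich: problem~\eqref{eq:SDPstandard} is obtained from~\eqref{eq:MRFmp3} by dropping the rank-$1$ constraint $\MY = \vy \vy\tran$, so every feasible point of~\eqref{eq:MRFmp3} (viewed through $\MY = \vy\vy\tran$) is feasible for~\eqref{eq:SDPstandard}, which yields $\fsdpmp \leq \foptmp$. Subtracting from $\froundedmp$ gives $\froundedmp - \foptmp \leq \froundedmp - \fsdpmp$, which is the claimed sub-optimality bound. I would also note that $\froundedmp \geq \foptmp$ because $\hat{\vy}$ is feasible for~\eqref{eq:MRFmp3} by construction of the rounding procedure, so the gap is nonnegative and is computable in practice (since $\fsdpmp$ is known after solving the relaxation).

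The step that requires the most care is establishing that the saddle-point of the Lagrangian arising from the converged dual ascent coincides with a primal--dual optimal pair of~\eqref{eq:SDPstandard}. This relies on (i) strong duality for the convex SDP, which I would invoke via Slater's condition as already referenced in the derivation of~\eqref{eq:DA2}, and (ii) the existence of a rank-deficient second-order critical point of~\eqref{eq:RRT2} at the converged $\vlambda^\star$, which is precisely the hypothesis transferred from Proposition~\ref{prop:boumal}. Once these two ingredients are in place, the rest is bookkeeping: collecting feasibility from both the primal subproblem~\eqref{eq:primalDescent} and the zero-gradient condition on the dual, and appealing to the equivalence between saddle points of $\calL$ and primal--dual optima under strong duality.
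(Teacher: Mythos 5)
Your proposal is correct and follows essentially the same route as the paper: both use Proposition~\ref{prop:boumal} to certify that $\MY^\star$ globally minimizes the Lagrangian at $\vlambda^\star$, use the zero-gradient condition to certify feasibility of $\MY^\star$ for~\eqref{eq:SDPstandard}, and then conclude optimality (the paper via the explicit weak-duality sandwich $d(\vlambda^\star) \leq \fsdpmp \leq \trace{\ML \MY^\star} = d(\vlambda^\star)$, you via the equivalent saddle-point characterization), while the second bullet is the identical relaxation argument. The only cosmetic remark is that your appeal to Slater's condition is unnecessary for the direction you actually use---a saddle point of the Lagrangian implies primal optimality unconditionally, which is exactly what the paper's explicit sandwich establishes.
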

The proof of Proposition~\ref{prop:dars:guarantees} is given in \isExtended{Appendix B}{the supplemental material~\cite[Appendix B]{Hu18tr-fuses}}.
The first claim in Proposition~\ref{prop:dars:guarantees} ensures that when the dual ascent method converges, it produces an optimal solution
for the standard SDP relaxation~\eqref{eq:SDPstandard}. The second claim states that we can compute an upper-bound on how far the \dars' solution is from optimality ($\froundedmp - \foptmp$) using the rounded objective $\froundedmp$ and the relaxed objective $\fsdpmp$.

\section{\fuses: Fast Unconstrained \\ SEmidefinite Solver} 
\label{sec:fuses}

In this section we propose a more direct way to obtain a semidefinite relaxation and a remarkably faster solver.
 While \dars is already able to compute an approximate MAP estimate in seconds for large problems, the approach 
 presented in this section requires two orders of magnitude less time to compute a solution of comparable \change{ }{(but slightly inferior)} quality.
 We first present a binary $\{0,1\}$ (rather than $\{-1,+1\}$) matrix formulation (\prettyref{sec:fuses-formulation}) and  
derive an SDP relaxation (\prettyref{sec:fuses-sdp}). We then present a Riemannian staircase approach 
to solve the resulting SDP in real time (\prettyref{sec:fuses-staircases}) 
and discuss performance guarantees (\prettyref{sec:fuses-guarantees}). 

\subsection{Matrix Formulation}
\label{sec:fuses-formulation}

In this section we rewrite the node variables $\class_i \in \classSet \doteq \{1,\ldots,\nrClasses\}$  as 
an $\nrNodes \times \nrClasses$ binary matrix $\Classes\in \{0,1\}^{\nrNodes\times\nrClasses}$ that is 
such that if an entry in position $(i,j)$ is equal to $1$, then node $i$ has label $j$ and is zero otherwise. 
In other words, the $i$-th row of $\Classes$ is a binary vector that describes the label of node $i$ and 
has a single entry equal to $1$ in position $j$, where $j$ is the label assigned to the node.
This is  a more intuitive parametrization of the problem and indeed leads to a more elegant matrix 
formulation, given as follows.

\begin{proposition}[Binary Matrix Formulation of MAP-MRF]
\label{prop:binaryMat}
Let $\MG  \in \Real{\nrNodes \times \nrClasses}$ and $\MH \in \Real{\nrNodes \times \nrNodes}$ be defined as follows:
\bea
\begin{split}
\label{eq:GHdefs}
\MG \doteq &
\left\{
\begin{array}{ll}
\MG_{i} =  -\penaltyTerm_{i} \ve_{\measuredClass_i}\tran, & \text{ if } i \in \unarySet\\
\MG_{i} = \zero_{\nrClasses}\tran, & \text{ otherwise}
\end{array}
\right. 
\\
\MH \doteq &
\left\{
\begin{array}{ll}
\MH_{ij} =  -\penaltyTerm_{ij},&  \text{ if } (i,j) \in \binarySet\\
\MH_{ij} = 0, &\text{ otherwise}
\end{array}
\right. \hspace{-3mm}
\end{split}
\eea
where $\MG_i$ is the $i$-th row of $\MG$, $\MH_{ij}$ is the entry of $\MH$ in row $i$ and column $j$, 
$\penaltyTerm_{i}$ and $\penaltyTerm_{ij}$ are the coefficients defining the MRF, \cf eq.~\eqref{eq:binary}, 
and  
$\ve_{\measuredClass_i}$ is a vector with a unique nonzero entry equal to 1 in position $\measuredClass_i$ ($\measuredClass_i$ is the measured label for node $i$).
Then the MAP estimator~\eqref{eq:MRF1} can be equivalently written as: 
\beq
\label{eq:binaryMat}
\begin{array}{rl}
\min_{\MX} & \trace{\MX\tran \MH \MX} + \trace{\MG \MX\tran} \\
\subject   & \diag{\MX \MX\tran} = \ones_{\nrNodes} \\
           & \MX\in\{0,1\}^{\nrNodes\times \nrClasses}
\end{array}
\eeq
\end{proposition}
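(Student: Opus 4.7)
The plan is to verify the equivalence by substituting the one-hot encoding $\MX_i \doteq \ve_{\class_i}\tran$ for each row of $\MX$ and showing that the resulting matrix objective coincides, up to an additive constant, with that of~\eqref{eq:binaryMat}. First I would argue that the feasible set of~\eqref{eq:binaryMat} is in bijection with the set of labelings $\class \in \classSet^\nrNodes$: since entries of $\MX$ are binary, the $i$-th diagonal entry of $\MX\MX\tran$ equals $\sum_k \MX_{ik}^2 = \sum_k \MX_{ik}$, so $\diag{\MX\MX\tran}=\ones_\nrNodes$ enforces exactly one nonzero entry per row of $\MX$, whose column index then recovers $\class_i$.

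For the unary potentials, I would rewrite each term as $E_i(\class_i) = \penaltyTerm_i\bigl(1 - \MX_i \ve_{\measuredClass_i}\bigr)$, exploiting the fact that under the one-hot identification the inner product $\MX_i \ve_{\measuredClass_i}$ equals $1$ exactly when $\class_i = \measuredClass_i$. Summing over $i \in \unarySet$ and invoking the definition of $\MG$ then yields $\sum_{i\in\unarySet}E_i(\class_i) = c_\unarySet + \trace{\MG \MX\tran}$ for a constant $c_\unarySet \doteq \sum_{i\in\unarySet}\penaltyTerm_i$, using the elementary identity $\trace{\MG\MX\tran} = \sum_i \MG_i \MX_i\tran$. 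An analogous manipulation applies to the binary potentials: since $\MX_i\MX_j\tran$ is the indicator of label agreement, one gets $E_{ij}(\class_i,\class_j) = \penaltyTerm_{ij}(1 - \MX_i\MX_j\tran)$, and a short trace computation based on cyclicity, namely $\trace{\MX\tran\MH\MX} = \trace{\MH\MX\MX\tran} = \sum_{i,j}\MH_{ij}\,\MX_j\MX_i\tran$, together with the definition of $\MH$, gives $\sum_{(i,j)\in\binarySet}E_{ij}(\class_i,\class_j) = c_\binarySet + \trace{\MX\tran\MH\MX}$. Adding the two contributions and dropping the additive constant $c_\unarySet + c_\binarySet$ (irrelevant for the argmin) delivers the objective of~\eqref{eq:binaryMat}.

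The main obstacle is purely bookkeeping. One has to ensure that the binary constraint $\MX \in \{0,1\}^{\nrNodes\times\nrClasses}$ is what allows $\MX_{ik}^2$ to collapse to $\MX_{ik}$, so that $\diag{\MX\MX\tran} = \ones_\nrNodes$ genuinely encodes the one-hot-row constraint rather than merely a Euclidean-norm condition; and the row/column indexing in the trace identity must be handled carefully to remain consistent with the (possibly asymmetric) support of $\MH$ over ordered pairs in $\binarySet$. Once these details are pinned down, the algebra proceeds directly and the equivalence follows.
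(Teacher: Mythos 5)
Your proposal is correct and follows essentially the same route as the paper's proof in Appendix~C: both verify that $\diag{\MX\MX\tran}=\ones_{\nrNodes}$ with binary entries enforces one-hot rows, rewrite the Potts potentials as $\penaltyTerm_i(1-\ve_{\measuredClass_i}\tran\MX_i\tran)$ and $\penaltyTerm_{ij}(1-\MX_i\MX_j\tran)$, and match the objective to $\trace{\MG\MX\tran}+\trace{\MX\tran\MH\MX}$ via the same trace identities, dropping additive constants. No substantive differences to report.
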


The equivalence between~\eqref{eq:MRF1} and~\eqref{eq:binaryMat} is proven in \isExtended{Appendix C}{\cite[Appendix C]{Hu18tr-fuses}}.
We note that the constraint $\diag{\MX \MX\tran} = \ones_{\nrNodes}$ in~\eqref{eq:binaryMat} (contrarily to the $\diag{\cdot}$ 
constraint in~\eqref{eq:MRFmp2})  imposes that each node has a unique label when $\MX\in\{0,1\}^{\nrNodes\times \nrClasses}$.
\omitted{
Moreover, the matrix $\MH$ has the sparsity pattern of the \emph{adjacency} matrix underlying the MRF. 
Note that since the MRF graph is undirected, by convention if $(i,j) \in \binarySet$ also $(j,i) \in \binarySet$, 
hence $\MH$ is a symmetric matrix.
}
%
%
%
%
%


\subsection{Novel Semidefinite Relaxation}
\label{sec:fuses-sdp}

This section presents a semidefinite relaxation of~\eqref{eq:binaryMat}.
Towards this goal, 
 we first homogenize the cost by lifting the problem to work on a larger variable:
 \beq 
 \label{eq:homogenization2}
 \MV \doteq \matTwo{\MX \\ \eye_\nrClasses} \;\;\; \left(\text{note: } \MV \MV\tran = \matTwo{\MX \MX\tran & \MX \\ \MX\tran & \eye_\nrClasses}\right)
\eeq
where $\eye_\nrClasses$ is the $\nrClasses\!\!\times\!\!\nrClasses$ identity matrix.
The reparametrization is given as follows.

\begin{proposition}[Homogenized Binary Matrix Formulation]
\label{prop:homBinaryMat}
Let us define 
$\MQ \doteq {\scriptsize \matTwo{\MH & \frac{1}{2} \MG \\
\frac{1}{2} \MG\tran & \zero}} \in \Real{(\nrNodes+\nrClasses)\times(\nrNodes+\nrClasses)}$. 
%
%
Then the MAP estimator~\eqref{eq:binaryMat} can be rewritten as: 
\beq
\label{eq:homBinaryMat}
\begin{array}{rcl}
\foptzo = & \min_{\MV} & \trace{\MV\tran \MQ \MV}  \\
&\subject   & \diag{[\MV \MV\tran]_{tl}} = \ones_{\nrNodes}  \\
&	       & [\MV \MV\tran]_{br} = \eye_{\nrClasses}  \\
&           & \MV\in\{0,1\}^{(\nrNodes+\nrClasses)\times(\nrClasses)}
\end{array}
\tag{\Pzo}
\eeq
where $[\MV \MV\tran]_{tl}$ denotes the $(\nrNodes \times \nrNodes)$ top-left block of the matrix $\MV \MV\tran$, \cf~\eqref{eq:homogenization2}, 
(the corresponding constraint rewrites the first constraint in~\eqref{eq:binaryMat}), 
and where $[\MV \MV\tran]_{br}$ denotes the $(\nrClasses \times \nrClasses)$ bottom-right block of $\MV \MV\tran$, \cf~\eqref{eq:homogenization2}. 
\omitted{which is constrained to be the identity matrix according 
to~\eqref{eq:homogenization2}.} 
\end{proposition}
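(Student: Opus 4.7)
The plan is to prove this by an explicit substitution and block-matrix computation, followed by a small symmetry argument to close the reverse direction. The direct direction is essentially bookkeeping: I would substitute the lifted variable $\MV = \matTwo{\MX \\ \eye_\nrClasses}$ into the cost and constraints of \eqref{eq:homBinaryMat} and show both reduce to the corresponding expressions in \eqref{eq:binaryMat}. This establishes that every feasible $\MX$ of \eqref{eq:binaryMat} lifts to a feasible $\MV$ of \eqref{eq:homBinaryMat} with the same objective value.

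Concretely, first I would use the block structure of $\MQ$ and $\MV$ to expand
\[
\MV\tran \MQ \MV \;=\; \MX\tran \MH \MX + \tfrac{1}{2}\MX\tran \MG + \tfrac{1}{2}\MG\tran \MX,
\]
then take the trace and use the identity $\trace{\MX\tran \MG} = \trace{\MG\tran \MX} = \trace{\MG \MX\tran}$ (all three equal $\sum_{i,j}\MG_{ij}\MX_{ij}$) to collapse the cost to $\trace{\MX\tran \MH \MX} + \trace{\MG \MX\tran}$, matching \eqref{eq:binaryMat}. For the constraints, I would read off from \eqref{eq:homogenization2} that $[\MV\MV\tran]_{tl} = \MX\MX\tran$, so the first constraint of \eqref{eq:homBinaryMat} becomes $\diag{\MX\MX\tran} = \ones_{\nrNodes}$, and $[\MV\MV\tran]_{br} = \eye_{\nrClasses}$ is automatic from the definition of $\MV$.

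The step requiring more care, and the one I expect to be the main obstacle, is the converse direction: showing that any feasible $\MV$ of \eqref{eq:homBinaryMat} corresponds to a feasible $\MX$ of \eqref{eq:binaryMat} with matching cost, because \eqref{eq:homBinaryMat} constrains only the \emph{Gram matrix} of the bottom block, not the bottom block itself. Write $\MV = \matTwo{\MX' \\ \MB}$ with $\MB \in \{0,1\}^{\nrClasses \times \nrClasses}$. The constraint $\MB \MB\tran = \eye_{\nrClasses}$ combined with binarity forces each row of $\MB$ to be a standard basis vector and the rows to be pairwise orthogonal, so $\MB$ must be a permutation matrix. Defining $\MX \doteq \MX' \MB\tran$, column permutation preserves $\{0,1\}$-entries and the row-sum-one structure, and a short computation using $\MB \MB\tran = \eye_{\nrClasses}$ gives
\[
\trace{\MV\tran \MQ \MV} = \trace{\MX'\tran \MH \MX'} + \trace{\MG \MB \MX'\tran} = \trace{\MX\tran \MH \MX} + \trace{\MG \MX\tran},
\]
while $\diag{\MX\MX\tran} = \diag{\MX' \MB\tran \MB \MX'\tran} = \diag{\MX'\MX'\tran} = \ones_{\nrNodes}$. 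Hence $\MX$ is feasible for \eqref{eq:binaryMat} with the same cost, so the two problems attain the same minimum, which is the sense in which they are equivalent.

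Beyond the permutation argument, no nontrivial obstacles arise: the remainder is pure block-matrix algebra and the cyclic/transposition trace identities. I would present the forward direction as the main derivation and relegate the permutation-symmetry argument to a brief remark, since in the subsequent SDP relaxation the integrality and binary constraints are dropped and the constraint $[\MV\MV\tran]_{br} = \eye_{\nrClasses}$ takes on its natural meaning as fixing a block of the lifted Gram matrix.
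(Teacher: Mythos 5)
Your proof is correct. Note that the paper itself gives no explicit proof of this proposition: it introduces the lifting $\MV \doteq \matTwo{\MX \\ \eye_\nrClasses}$ in~\eqref{eq:homogenization2} and presents~\eqref{eq:homBinaryMat} as an immediate reparametrization, relying only on the block expansion you carry out in your forward direction (the cost identity $\trace{\MV\tran\MQ\MV} = \trace{\MX\tran\MH\MX} + \trace{\MG\MX\tran}$ and the reading-off of the two block constraints). Where you go beyond the paper is the converse direction, and this addresses a real subtlety in the statement as written: the feasible set of~\eqref{eq:homBinaryMat} is the set of \emph{all} binary $\MV$ whose bottom block $\MB$ satisfies $\MB\MB\tran = \eye_\nrClasses$, which is strictly larger than the image of the lifting map, since $\MB$ can be any $\nrClasses\times\nrClasses$ permutation matrix rather than the identity. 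Your observation that such a $\MV$ corresponds to $\MX \doteq \MX'\MB\tran$ with identical cost (using $\MB\tran\MB = \eye_\nrClasses$ and cyclicity of the trace) and identical feasibility (since $\MX'\MB\tran(\MX'\MB\tran)\tran = \MX'\MX'\tran$) is exactly what is needed to conclude that the two problems attain the same optimal value, and it shows the extra feasible points are harmless (they are relabelings of genuine solutions). The paper leaves this implicit; your version is the more complete argument, at the cost of one extra paragraph.
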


At this point it is straightforward to derive a semidefinite relaxation, by noting that 
$\trace{\MV\tran \MQ \MV} = \trace{\MQ \MV \MV\tran}$ and by observing that $\MV \MV\tran$ 
is a $(\nrNodes+\nrClasses)\times(\nrNodes+\nrClasses)$ symmetric positive semidefinite matrix of rank $\nrClasses$.

\begin{proposition}[Semidefinite Relaxation]
\label{prop:newSDP}
The following SDP is a convex relaxation of the MAP estimator~\eqref{eq:homBinaryMat}:
\beq
\vspace{-0.2cm}
\label{eq:newSDP}
\begin{array}{rcl}
\fsdpzo \doteq & \min_{\MZ} & \trace{\MQ \MZ}  \\
&\subject   & \diag{[\MZ]_{tl}} = \ones_{\nrNodes}  \\
&	       & [\MZ]_{br} = \eye_{\nrClasses}  \\
&           & \MZ \succeq 0
\end{array}
\tag{\Szo}
\eeq
where $[\MZ]_{tl}$ and $[\MZ]_{br}$ are the $(\nrNodes \times \nrNodes)$ top-left block 
and the $(\nrClasses \times \nrClasses)$ bottom-right block
of the matrix $\MZ$, respectively, and we dropped the rank-$\nrClasses$ constraint for $\MZ$.
\end{proposition}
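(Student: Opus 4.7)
The plan is to verify the two defining properties of a convex relaxation: (i) the program~\eqref{eq:newSDP} is convex, and (ii) every feasible point of~\eqref{eq:homBinaryMat} induces a feasible point of~\eqref{eq:newSDP} with the same objective value, so that $\fsdpzo \leq \foptzo$. The bridge between the two formulations is the standard lifting $\MZ \doteq \MV\MV\tran$, which sends the rank-$\nrClasses$ quadratic problem into a linear problem over the positive semidefinite cone.

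First I would establish convexity. The objective $\trace{\MQ\MZ}$ is linear in $\MZ$. The constraints $\diag{[\MZ]_{tl}} = \ones_{\nrNodes}$ and $[\MZ]_{br} = \eye_{\nrClasses}$ each pick out specified entries of $\MZ$ and equate them to constants, hence they are affine. Finally, $\MZ \succeq 0$ defines the positive semidefinite cone, which is convex. Therefore~\eqref{eq:newSDP} is a convex program.

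Next I would show that~\eqref{eq:newSDP} is a relaxation of~\eqref{eq:homBinaryMat}. Given any $\MV$ feasible for~\eqref{eq:homBinaryMat}, define $\MZ \doteq \MV\MV\tran \in \Real{(\nrNodes+\nrClasses)\times(\nrNodes+\nrClasses)}$. Using the cyclic property of the trace, the objective transforms as $\trace{\MV\tran \MQ \MV} = \trace{\MQ \MV\MV\tran} = \trace{\MQ \MZ}$. By construction $\MZ = \MV\MV\tran$ is symmetric and positive semidefinite, so $\MZ \succeq 0$. Moreover, by the block structure of~\eqref{eq:homogenization2}, $[\MZ]_{tl} = \MX\MX\tran$ and $[\MZ]_{br} = \eye_{\nrClasses}$, so the two affine constraints in~\eqref{eq:newSDP} reduce to $\diag{\MX\MX\tran} = \ones_{\nrNodes}$ and $[\MZ]_{br}=\eye_\nrClasses$, which are satisfied by any feasible $\MV$ of~\eqref{eq:homBinaryMat}. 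Hence $\MZ$ is feasible for~\eqref{eq:newSDP} and achieves the same objective, which yields $\fsdpzo \leq \foptzo$.

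Finally, I would identify the two relaxation steps explicitly. Every lifted matrix $\MZ = \MV\MV\tran$ with $\MV \in \Real{(\nrNodes+\nrClasses)\times\nrClasses}$ satisfies $\rank(\MZ) \leq \nrClasses$; dropping this non-convex rank constraint is the first relaxation. Similarly, the binary condition $\MV \in \{0,1\}^{(\nrNodes+\nrClasses)\times\nrClasses}$ has no expression purely in terms of $\MZ$ (it would translate into integrality/combinatorial constraints); dropping it is the second relaxation. Since no constraint of~\eqref{eq:newSDP} is violated by a feasible lifting from~\eqref{eq:homBinaryMat}, the former is a valid convex relaxation of the latter. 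The only subtle step is verifying that the block identifications of $\MV\MV\tran$ from~\eqref{eq:homogenization2} really coincide with $[\MZ]_{tl}$ and $[\MZ]_{br}$ as the constraints require; this is a direct calculation using the block form of $\MV$, so no genuine obstacle arises.
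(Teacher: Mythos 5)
Your proposal is correct and follows exactly the route the paper takes: the paper's entire justification is the sentence preceding the proposition, namely that $\trace{\MV\tran \MQ \MV} = \trace{\MQ \MV\MV\tran}$ and that $\MV\MV\tran$ is a symmetric positive semidefinite matrix of rank $\nrClasses$ whose blocks satisfy the affine constraints, after which the rank and integrality conditions are dropped. Your write-up simply makes the feasibility mapping, the convexity check, and the two dropped constraints explicit; there is no substantive difference.
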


\omitted{
The following proposition shows that we can further strengthen the convex relaxation~\eqref{eq:newSDP}, while 
in~\prettyref{sec:riemannianStaircase} we show that the formulation~\eqref{eq:newSDP} enables the design of 
fast solvers. In the experimental section we evaluate both variants.

\begin{proposition}[Strengthening the convex relaxation]
\label{prop:newSDP2}
Recall that $\foptzo$ is the optimal objective attained by the MAP estimator~\eqref{eq:homBinaryMat} 
and $\fsdpopt$ is the optimal objective attained by the convex relaxation~\eqref{eq:newSDP}. 
Moreover, let $\geq$ denote the component-wise inequality for the entries of a matrix, and 
let $[\MZ]_{tr}$ denote the $(\nrNodes \times \nrClasses)$ top-right block of the matrix $\MZ$. 
Then, the following SDP is also a convex relaxation of the MAP estimator~\eqref{eq:homBinaryMat}:
\beq
\label{eq:newSDP2}
\begin{array}{rcl}
\fsdpoptTwo \doteq & \min_{\MZ} & \trace{\MQ \MZ}  \\
&\subject   & \diag{[\MZ]_{tl}} = \ones_{\nrNodes}  \\
&	       & [\MZ]_{br} = \eye_{\nrClasses}  \\
&           & \MZ \succeq 0, \quad \MZ \geq 0 \\
&           & [\MZ]_{tr} \ones_{\nrClasses}=\ones_{\nrNodes} 
\end{array}
\tag{\Stwo}
\eeq
and is a ``tighter'' relaxation, i.e., it holds $\fsdpopt \leq \fsdpoptTwo \leq \foptzo$.
\end{proposition}

The proof of the claim is given in Appendix~\ref{sec:proof:prop:newSDP2}.
 Intuitively, we added extra constraints to~\eqref{eq:newSDP} (which makes $\fsdpopt \leq \fsdpoptTwo$), 
 and those constraints are always satisfied by~\eqref{eq:homBinaryMat} (making~\eqref{eq:newSDP2} a convex relaxation of the MAP estimator). 
 The matrix $\MZ$ in~\eqref{eq:newSDP2} is a \emph{doubly non-negative matrix}, i.e., 
 is it positive semidefinite and nonnegative (it has entries $\geq 0$). 
 } 
\subsection{Accelerated Inference via the Riemannian Staircase}
\label{sec:fuses-staircases}

We now present a fast specialized solver to solve the SDP~\eqref{eq:newSDP} in real time and for large problem instances.
Similarly to Section~\ref{sec:dars-staircase}, we use the Burer-Monteiro method~\cite{Burer03mp}, 
which replaces the matrix $\MZ$ in~\eqref{eq:newSDP} with a rank-$r$ product $\MR \MR\tran$: 
\beq
\label{eq:RRT21}
\begin{array}{rcl}
& \min_{\MR} & \trace{\MQ \MR \MR\tran}  \\
&\subject   & \diag{[\MR \MR\tran]_{tl}} = \ones_{\nrNodes}  \\
&	       & [\MR \MR\tran]_{br} = \eye_{\nrClasses}  
\end{array}
\eeq
where $\MR \in \Real{\nrNodes \times r}$ (for a suitable rank $r$), and 
where the constraint $\MZ \succeq 0$ in~\eqref{eq:newSDP}  becomes redundant after the substitution, and is dropped. 

Similarly to Section~\ref{sec:dars-staircase},
we note that the constraint set in~\eqref{eq:RRT21} describes a smooth manifold, and in particular a product of Stiefel manifolds. Specifically, we observe that
$\diag{[\MR \MR\tran]_{tl}} = \ones_{\nrNodes}$ can be written as 
$\MR_i \MR_i\tran = 1$, $i=1,\ldots,\nrNodes$, which is equivalent to saying that 
$\MR_i \in \Stiefel{1}{r}$ for $i=1,\ldots,\nrNodes$. 
Moreover, denoting with $\MR_b$ the block matrix including the last 
$\nrClasses$ rows of $\MR$, the constraint $[\MR \MR\tran]_{br} = \eye_{\nrClasses}$ can be written as $\MR_b \MR_b\tran = \eye_\nrClasses$, which 
is equivalent to saying that $\MR_b \in \Stiefel{\nrClasses}{r}$.
The two observations above allow concluding that the matrix $\MR$ 
belongs to the product manifold $\Stiefel{1}{r}^\nrNodes \times \Stiefel{\nrClasses}{r}$.
%
Therefore, we can rewrite~\eqref{eq:RRT21} as an \emph{unconstrained} optimization on manifold: 
\beq
\label{eq:RRT22}
\min_{\MR \in \Stiefel{1}{r}^\nrNodes \times \Stiefel{\nrClasses}{r}} \trace{\MQ \MR \MR\tran} 
\tag{\Rzo}
\eeq
The formulation~\eqref{eq:RRT22} is non-convex but one can find local minima efficiently using iterative methods~\cite{Boumal16nips,Rosen16wafr-sesync}. 
We can again adapt the result from Boumal\setal~\cite{Boumal16nips} to conclude that rank-deficient local solutions of~\eqref{eq:RRT22} can be mapped back to global solutions of the semidefinite relaxation~\eqref{eq:newSDP}. 

 \begin{proposition}[Optimality Conditions for~\eqref{eq:RRT22}, Corollary 8 in~\cite{Boumal16nips}]
 \label{prop:boumal2}
 If $\MR \in  \Stiefel{1}{r}^\nrNodes \times \Stiefel{\nrClasses}{r}$ is a (column) rank-deficient second-order critical point of problem~\eqref{eq:RRT22}, then $\MR$ is a global optimizer of~\eqref{eq:RRT22}, and 
 $\MZ^\star \doteq \MR\MR\tran$ is a solution of the semidefinite relaxation~\eqref{eq:newSDP}.
 \end{proposition}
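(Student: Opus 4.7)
The plan is to directly invoke Corollary 8 of Boumal et al.~\cite{Boumal16nips} after verifying that the SDP~\eqref{eq:newSDP} and its rank-$r$ factorization~\eqref{eq:RRT22} fit the hypotheses of that result. The corollary applies to any SDP whose equality constraints, under the Burer--Monteiro substitution $\MZ = \MR\MR\tran$, produce a constraint set that is a product of Stiefel manifolds; the conclusion is that every column rank-deficient second-order critical point of the factored problem is a global minimizer whose outer product solves the SDP. Since the excerpt already derived that the feasible set of~\eqref{eq:RRT22} is exactly $\Stiefel{1}{r}^\nrNodes \times \Stiefel{\nrClasses}{r}$, most of the work is verifying that the linear equality constraints of~\eqref{eq:newSDP} map in the precise form required by~\cite{Boumal16nips}.

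First, I would rewrite the equality constraints of~\eqref{eq:newSDP} in the canonical form $\trace{\MA_k \MZ} = b_k$. The diagonal condition $\diag{[\MZ]_{tl}} = \ones_\nrNodes$ produces $\nrNodes$ scalar equations $\trace{\ME_{ii} \MZ} = 1$ for $i = 1,\dots,\nrNodes$, where $\ME_{ii}$ has a single nonzero entry in position $(i,i)$ of the top-left block. The identity constraint $[\MZ]_{br} = \eye_\nrClasses$ produces further scalar equations $\trace{\MF_{jk} \MZ} = \delta_{jk}$, where $\MF_{jk}$ is supported on the bottom-right $\nrClasses \times \nrClasses$ block. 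Each of these constraint matrices is symmetric with support confined either to a single diagonal entry or to a pair of symmetric off-diagonal entries, which is precisely the pattern that induces Stiefel structure under the Burer--Monteiro substitution.

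Second, I would show that these constraints, when translated through $\MZ = \MR\MR\tran$, yield the product Stiefel manifold already identified in the excerpt. The first $\nrNodes$ constraints give $\MR_i \MR_i\tran = 1$ for each of the first $\nrNodes$ rows, placing them in $\Stiefel{1}{r}$. The constraints encoded by $\MF_{jk}$ couple the last $\nrClasses$ rows: the diagonal ones ($j=k$) impose unit norm, while the off-diagonal ones ($j\neq k$) impose pairwise orthogonality, so that the bottom block $\MR_b$ lies in $\Stiefel{\nrClasses}{r}$. This yields the product manifold stated in~\eqref{eq:RRT22}, confirming that the setting of~\cite[Corollary 8]{Boumal16nips} is met verbatim.

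Third, with the framework verified, Corollary 8 of~\cite{Boumal16nips} applies directly: any column rank-deficient second-order critical point $\MR$ of~\eqref{eq:RRT22} is a global minimizer of the factored problem, and the outer product $\MZ^\star \doteq \MR\MR\tran$ is a globally optimal solution of the SDP~\eqref{eq:newSDP}. The main obstacle is largely notational rather than substantive, namely ensuring that the matrix equation $[\MZ]_{br} = \eye_\nrClasses$ decomposes into scalar trace constraints whose combined action on $\MR\MR\tran$ coincides with membership in $\Stiefel{\nrClasses}{r}$; once this identification is made, Proposition~\ref{prop:boumal2} follows as a corollary of~\cite[Corollary 8]{Boumal16nips}, mirroring the analogous passage used for Proposition~\ref{prop:boumal} in Section~\ref{sec:dars-staircase}.
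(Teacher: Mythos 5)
Your proposal is correct and matches the paper's treatment: the paper offers no separate proof of this proposition, instead deriving in the text that the Burer--Monteiro factorization of~\eqref{eq:newSDP} has feasible set $\Stiefel{1}{r}^\nrNodes \times \Stiefel{\nrClasses}{r}$ and then citing Corollary~8 of~\cite{Boumal16nips} verbatim. Your additional step of writing the constraints in canonical trace form is a harmless elaboration of the same argument.
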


Similarly to Section~\ref{sec:dars-staircase}, we can adopt a Riemannian staircase method, where one solves~\eqref{eq:RRT22} for increasing values of $r$ till a rank-deficient solution is found. 
In \emph{all tests} we performed, \change{a single step of the staircase ($r = \nrClasses+1$ in this case) was sufficient to find a rank-deficient solution.}{at most two steps of the staircase (initialized at $r = \nrClasses+1$) were sufficient to find a rank-deficient solution.} 

  \omitted{

\subsection{Rounding}
\label{sec:rounding}

This section discusses how to compute a discrete label assignment from the solution of the SDPs~\eqref{eq:newSDP} and~\eqref{eq:newSDP2}.
Let us call $\MZopt$ and $\MZoptTwo$ the optimal solutions of~\eqref{eq:newSDP} and~\eqref{eq:newSDP2};
when using the Riemannian staircase method of Section~\ref{sec:riemannianStaircase}, then $\MZopt \doteq (\MR^\star)(\MR^\star)\tran$ 
where $\MR^\star$ is the rank-deficient solution produced by solving~\eqref{eq:RRT2}.
We note that we substituted $\MZ$ for the product $\MV\MV\tran$ in~\eqref{eq:homogenization2} hence \emph{ideally} 
we would expect to read the MAP estimate $\MXopt$ from the top-right $\nrNodes \times \nrClasses$ submatrix of $\MZopt$ and $\MZoptTwo$.
Unfortunately, this is only the case when the relaxation is tight (Proposition~\ref{prop:tightness2}) which is typically not the case in practice, 
while in general the top-right submatrix contains fractional (rather than binary) entries. 

Let us call $\MXrel$ and $\MXrelTwo$ the top-right $\nrNodes \times \nrClasses$ submatrices of $\MZopt$ and $\MZoptTwo$, respectively.
These matrices are not feasible for the original MAP estimation problem~\eqref{eq:binaryMat}, since they are not binary. 
Therefore, we have to apply \emph{rounding} strategies to recover a binary estimate from $\MXrel$ and $\MXrelTwo$.
We review four rounding strategies in the following.

\myparagraph{Naive rounding}
We already observed that the $i$-th row of $\MXrel$ (resp. $\MXrelTwo$) describes the label assignment of node $i$.
Naive rounding takes the largest element (in absolute value) of the $i$-th row of $\MXrel$ (resp. $\MXrelTwo$) to be the label assignment for node $i$; 
more formally, if the largest entry (in absolute value) of row $i$ is in position $j$ (column index), the naive rounding 
assigns node $i$ to belong to the $j$-th class.

\myparagraph{Winner-Take-All strategy~\cite{Schellewald05cvpr}}
We describe this strategy for rounding~\eqref{eq:newSDP}, while we remark that the very sample approach applies to~\eqref{eq:newSDP2}. 
The winner-take-all rounding strategy looks for the binary matrix $\MXrounded$ closest to $\MXrel$:
\beal
\label{eq:win1}
\min_\MXrounded &  \| \MXrounded - \MXrel \|^2_\frob \\
\subject &  \MXrounded \in \{0,1\}^{\nrNodes \times \nrClasses}, \quad \MXrounded \ones_{\nrClasses}=\ones_{\nrNodes}
\eeal
where $\| \cdot \|_\frob$ denotes the Frobenius norm of a matrix. 
We note that $\| \MXrounded - \MXrel \|^2_\frob = \trace{(\MXrounded - \MXrel) (\MXrounded - \MXrel)\tran} = - 2 \trace{\MXrel\tran \MXrounded} + \text{constant}$. 
Hence~\eqref{eq:win1} is equivalent to:
 \beal
\label{eq:win2}
\max_\MXrounded &  \trace{\MXrel\tran \MXrounded} \\
\subject &  \MXrounded \in \{0,1\}^{\nrNodes \times \nrClasses}, \quad \MXrounded \ones_{\nrClasses}=\ones_{\nrNodes}
\eeal
We further note that~\eqref{eq:win2} separates into $\nrNodes$ independent problems, where the $i$-th  problem only involves 
the $i$-th row of $\MXrounded$ (i.e., the label of the $i$-th node), that we denote with $\hat{\vxx}_i$:
 \beal
\label{eq:win3}
\max_{\hat{\vxx}_i} &  [\MXrel]_i\tran \hat{\vxx}_i \\
\subject &  \hat{\vxx}_i \in \{0,1\}^{\nrNodes}, \quad \ones_{\nrClasses}\tran \hat{\vxx}_i = 1
\eeal
The feasible set of~\eqref{eq:win3} (for each $i=1,\ldots,\nrNodes$) is the set of vectors with a single nonzero element equal to $1$, 
hence the solution of~\eqref{eq:win3}  can be easily computed by assigning the nonzero entry to the largest element of $[\MXrel]_i\tran$.
This strategy is equivalent to naive rounding for~\eqref{eq:newSDP2}, but it may produce different results for~\eqref{eq:newSDP}, since $\MXrel$ 
 may have negative entries. 

\myparagraph{Sampling~\cite{Schellewald05cvpr}}
Due the constraints in~\eqref{eq:newSDP2}, $\MXrelTwo$ satisfies $\MXrelTwo \geq 0$ and $\MXrelTwo \ones_{\nrClasses}=\ones_{\nrNodes}$ 
hence each row $[\MXrelTwo]_i$ of $\MXrelTwo$ has nonnegative entries that sum up to one. Therefore, we can interpret the entries of $[\MXrelTwo]_i$ 
as a probability distribution and sample the $i$-th label from such distribution. Sampling-based rounding randomly samples labels for all nodes and 
repeats the sampling $T$ times, choosing the rounded estimate as the one that attains the smallest cost. 
This strategy cannot be directly applied to~\eqref{eq:newSDP} since $\MXrel$ may have negative entries and rows do not sum up to 1. 

\myparagraph{Randomized projection~\cite{Goemans95acm}} 
The most popular method in this family, 
is the \emph{randomized hyperplanes} method, 
commonly applied to binary segmentation problems, thanks to to the seminal result of Goemans and Williamson~\cite{Goemans95acm}, who 
provide suboptimality guarantees for the rounded result. 
In the multi-class case, a randomized project method~\cite{Wang15cvpr}  works as follows: it  
factorizes the matrix $\MZopt$ as $\MZopt = \M{\Psi} \; \M{\Psi}\tran$ with $\M{\Psi} \in \Real{\nrClasses \times d}$ where $d$ is the rank of $\MZopt$.
The factorization can be obtained using singular value decomposition. 
Then, the randomized project method samples a projection matrix $\MP \in \Real{d \times \nrClasses}$ with entries drawn from a normal distribution with 
zero mean and unit variance. Finally, the method selects the labels from the product $\M{\Psi} \MP$ in a winner-take-all manner. 
The sampling procedure is repeated $T$ times and the rounded estimate is chosen as the one attaining the smallest cost.}
  \isExtended{

\subsection{\fuses: Summary, Convergence, and Guarantees}
\label{sec:fuses-guarantees}

We name \fuses (\emph{Fast Unconstrained SEmidefinite Solver}) the approach presented in this section. 
Contrarily to \dars, \fuses is extremely simple and only requires solving the rank-restricted problem~\eqref{eq:RRT22}, which 
can be solved using iterative methods, such as the Truncated Newton method.
Besides its simplicity, \fuses is guaranteed to converge to the solution of the SDP~\eqref{eq:newSDP} for increasing values of the rank $r$ 
(Proposition~\ref{prop:boumal2}).\omitted{, while in practice we observe that choosing $r = \nrClasses+1$ always produces a rank-deficient critical point, 
which is optimal according to  Proposition~\ref{prop:boumal2}.}

\myparagraph{Rounding} Upon convergence, \fuses produces a matrix $\MZ^\star$. 
Similarly to \dars, we obtain a rounded solution by computing a rank-K approximation of $\MZ^\star$ and rounding the corresponding matrix in
 $\{0,1\}$ (i.e., we assign the largest element in each row to 1 and we zero out all the others).
 We denote with $\hat{\MX}$ the resulting estimate and we 
 call $\froundedzo$ the objective value attained by $\hat{\MX}$ in~\eqref{eq:binaryMat}.

Since the SDP~\eqref{eq:newSDP} is a relaxation of the MAP estimator~\eqref{eq:homBinaryMat}, it is straightforward to prove the following 
proposition. 
\begin{proposition}[Guarantees in \fuses]
\label{prop:fuses:guarantees}
	Let $\fsdpzo$ be the optimal objective attained by $\MZ^\star = (\MR)(\MR)\tran$ in~\eqref{eq:newSDP}, 
 $\foptzo$ be the optimal objective of~\eqref{eq:homBinaryMat}, and $\froundedzo$ be the objective attained by the rounded solution $\hat{\MX}$, then 
 $\froundedzo - \foptzo \leq \froundedzo - \fsdpzo$.
\end{proposition}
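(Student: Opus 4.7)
The claim is essentially a one-line consequence of the fact that~\eqref{eq:newSDP} is obtained from~\eqref{eq:homBinaryMat} by a chain of relaxations. The plan is to first verify this relaxation relationship carefully, then deduce the inequality by a trivial rearrangement.

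First, I would make explicit why $\fsdpzo \leq \foptzo$. Starting from any feasible $\MV$ for~\eqref{eq:homBinaryMat}, set $\MZ \doteq \MV\MV\tran$; by~\eqref{eq:homogenization2} this $\MZ$ is symmetric positive semidefinite, its top-left block $[\MZ]_{tl} = \MX\MX\tran$ has unit diagonal because $\MV$ is feasible, its bottom-right block equals $\eye_\nrClasses$, and $\trace{\MV\tran \MQ \MV} = \trace{\MQ \MV\MV\tran} = \trace{\MQ \MZ}$. Thus $\MZ$ is feasible for~\eqref{eq:newSDP} and attains the same objective. Since every feasible point of the original problem corresponds to a feasible point of the relaxation with identical cost, taking infima gives $\fsdpzo \leq \foptzo$.

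Second, I would conclude the stated inequality: adding $\froundedzo$ to $-\foptzo \leq -\fsdpzo$ yields
\begin{equation*}
\froundedzo - \foptzo \;\leq\; \froundedzo - \fsdpzo,
\end{equation*}
which is precisely the claim.

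There is no real obstacle here. The only step requiring any care is the verification that the substitution $\MZ = \MV\MV\tran$ indeed maps a feasible $\MV$ for~\eqref{eq:homBinaryMat} to a feasible $\MZ$ for~\eqref{eq:newSDP}; this just amounts to inspecting the three constraints (semidefiniteness, top-left diagonal, bottom-right block) and confirming they all follow from the definition of $\MV$ in~\eqref{eq:homogenization2}. Once that is in place, the rest is rearrangement. In particular, the rounded estimate $\hat{\MX}$ is feasible for~\eqref{eq:binaryMat} (and hence~\eqref{eq:homBinaryMat}) by construction of the rounding procedure, so $\froundedzo$ is a finite upper bound on $\foptzo$, making the inequality a genuine per-instance suboptimality certificate computable from quantities \fuses produces.
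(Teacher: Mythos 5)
Your proof is correct and follows essentially the same route as the paper: establish that~\eqref{eq:newSDP} is a relaxation of~\eqref{eq:homBinaryMat} (so $\fsdpzo \leq \foptzo$, which the paper delegates to Proposition~\ref{prop:newSDP}) and then add $\froundedzo$ to both sides of $-\foptzo \leq -\fsdpzo$. Your explicit verification that $\MZ = \MV\MV\tran$ satisfies all three SDP constraints is just the detail the paper absorbs into its relaxation proposition, so nothing is missing and nothing is genuinely different.
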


Again, we can use Proposition~\ref{prop:fuses:guarantees} to compute how far the solution computed by \fuses is from the optimal objective 
attained by the MAP estimator.

\omitted{
In this section we show how to obtain \emph{per-instance} suboptimality bounds for the SDPs~\eqref{eq:newSDP}~\eqref{eq:newSDP2},
 and provide a-posteriori conditions under which these formulation are able to exactly recover the optimal MAP estimate 
 (exactness will not be attained in general, due to the NP-hardness of the problem).

\begin{proposition}[Per-instance Suboptimality Guarantees for~\eqref{eq:newSDP} and~\eqref{eq:newSDP2}]
\label{prop:guarantees}
Let $\foptzo$ be the optimal objective attained by the MAP estimate, see eq.~\eqref{eq:binaryMat}, 
and let $\fsdpopt$ be the optimal objective of the SDP relaxation~\eqref{eq:newSDP}.
Moreover, let $\fsdpround$ be the objective attained in~\eqref{eq:binaryMat} 
by $\MXrounded$, the estimate of the SDP relaxation~\eqref{eq:newSDP} after rounding is applied.
Then, it holds:
\bea
\label{eq:guarantees}
\hspace{-5mm}
\fsdpround - \foptzo \leq \fsdpround - \fsdpopt & \text{(suboptimality bound for~\eqref{eq:newSDP})}
\eea
i.e., we can compute a bound on the suboptimality gap after rounding $\fsdpround - \fopt$ (where $\fopt$ is unknown in practice) 
by using the optimal cost of the relaxation $\fsdpopt$.

Similarly, calling $\fsdpoptTwo$ the optimal objective of the SDP relaxation~\eqref{eq:newSDP2} and 
$\fsdproundTwo$ be the objective attained by the rounded estimate $\MXroundedTwo$ of~\eqref{eq:newSDP2}, it holds:
\bea
\label{eq:guarantees2}
\hspace{-5mm}
\fsdproundTwo - \foptzo \leq \fsdproundTwo - \fsdpoptTwo & \text{(suboptimality bound for~\eqref{eq:newSDP2})}
\eea
\end{proposition}

\begin{proof}
Let us start by proving the inequality~\eqref{eq:guarantees}. 
We observe that $\fsdpopt \leq \foptzo$ (Proposition~\ref{prop:newSDP2}), hence $-\foptzo \leq -\fsdpopt$; adding $\fsdpround$ to both sides of the 
inequality we obtain~\eqref{eq:guarantees}.
The proof of the inequality~\eqref{eq:guarantees2} proceeds along the same lines. 
\end{proof}

Proposition~\ref{prop:guarantees} provides computational tools to quantify the suboptimality 
of the rounded solutions $\MXrounded$ and $\MXroundedTwo$. 

\begin{proposition}[Tightness of~\eqref{eq:newSDP2}]
\label{prop:tightness2}
If the solution $\MZ^\star$ of the SDP relaxation~\eqref{eq:newSDP2} has rank $\nrClasses$, 
then it can be written as $\MZ^\star = (\MV^\star) (\MV^\star)\tran$, and $\MV \in \{0,1\}^{(\nrNodes+\nrClasses)\times(\nrClasses)}$
is an optimal solution (MAP estimate) for the original problem~\eqref{eq:binaryMat}.
\end{proposition}

\todo{proof missing}

In the experimental section we show that the propose SDPs attain near-optimal performance in real-world segmentation problems, while 
tightness is not attained in general.

\omitted{
The following proposition, instead, provides two special cases where the proposed relaxations can be shown to be tight and 
recover the exact MAP estimate of~\eqref{eq:binaryMat}.

\begin{proposition}[Maximal Agreement and Tightness of~\eqref{eq:newSDP} and~\eqref{eq:newSDP2}]
Assume an MRF with attractive potentials ($\penaltyTerm_{i} \geq 0$, $\penaltyTerm_{ij} \geq 0$) and assume that the graph underlying the 
MRF is connected. Then if the SDP~\eqref{eq:newSDP} attains an optimal cost cost $\fsdpopt = - \sum_{i \in \unarySet} \penaltyTerm_{i}
- \sum_{(i,j)\in\binarySet} \penaltyTerm_{ij}$, then  
\end{proposition}
}
}}{

\subsection{\fuses: Summary, Convergence, and Guarantees}
\label{sec:fuses-guarantees}

We name \fuses (\emph{Fast Unconstrained SEmidefinite Solver}) the approach presented in this section. 
Contrarily to \dars, \fuses is extremely simple and only requires solving the rank-restricted problem~\eqref{eq:RRT22}, which 
can be solved using iterative methods. \omitted{, such as the Truncated Newton method.}
Besides its simplicity, \fuses is guaranteed to converge to the solution of the SDP~\eqref{eq:newSDP} for increasing values of the rank $r$ 
(Proposition~\ref{prop:boumal2}).\omitted{, while in practice we observe that choosing $r = \nrClasses+1$ always produces a rank-deficient critical point, 
which is optimal according to  Proposition~\ref{prop:boumal2}.}

\myparagraph{Rounding} Upon convergence, \fuses produces a matrix $\MZ^\star$. 
Similarly to \dars, we obtain a rounded solution by computing a rank-K approximation of $\MZ^\star$ and rounding the corresponding matrix in
 $\{0,1\}$ (i.e., we assign the largest element in each row to 1 and we zero out all the others).
 We denote with $\hat{\MX}$ the resulting estimate and we 
 call $\froundedzo$ the objective value attained by $\hat{\MX}$ in~\eqref{eq:binaryMat}.

Since the SDP~\eqref{eq:newSDP} is a relaxation of the MAP estimator~\eqref{eq:homBinaryMat}, it is straightforward to prove~\cite[Proposition 7]{Hu18tr-fuses}
that $\froundedzo - \foptzo \leq \froundedzo - \fsdpzo$, where $\fsdpzo$ is the optimal objective of~\eqref{eq:newSDP}, 
 and $\foptzo$ is the optimal objective of~\eqref{eq:homBinaryMat}. 
}
  \omitted{

\section{Learning MRF Potentials \\ for Semantic Segmentation}
\label{sec:kkt}

\todo{complete this section}

inverse KKT \url{http://journals.sagepub.com/doi/abs/10.1177/0278364917745980}

\grayout{
\bit
\item typical approaches sets $\lambda$ heuristically 
\item \cite{Szeliski08pami-surveyMRF} stresses that near optimal solutions do not necessarily mean more accurate results: problem with cost function design
\item we propose a more grounded way..
\eit
}}


\section{Experiments}
\label{sec:experiments}

This section evaluates the proposed \change{MRF solvers}{approaches, \fuses and \dars,} on semantic segmentation problems, comparing their performance against \change{the state of the art.}{several state-of-the-art MRF solvers.}
\omitted{
Our experiments show that
(i) \fuses and \dars produce near-optimal solutions, attaining an objective within \tocheck{2\%} of the optimum,
(ii) \fuses is more than two orders of magnitude faster than \dars while attaining similar solution quality,
(iii) \fuses is as fast as local search methods while being a global solver. 
}

\subsection{\fuses and \dars: Implementation Details} 
We implemented \fuses and \dars in C++ using Eigen's sparse matrix manipulation and leveraging the optimization suite developed in~\cite{Rosen16wafr-sesync}. Sparse matrix manipulation is crucial for speed and memory reasons, since the involved matrices are very large. For instance in \dars, the matrix
$\ML_{\lambda}$ in~\eqref{eq:RRT2} has size $(\nrNodes\nrClasses+1) \times (\nrNodes\nrClasses+1)$ where typically $\nrNodes > 10^3$ and $\nrClasses > 20$.
\omitted{Similarly in \fuses, the smaller matrix $\MQ$ in~\eqref{eq:RRT22} has size $(\nrNodes+\nrClasses) \times (\nrNodes+\nrClasses)$ and is sparse. }
We initialize the rank $r$ of the Riemannian Staircase to be $r=2$ for \dars and $r = \nrClasses+1$ for \fuses (this is the smallest rank for which we expect a rank-deficient solution). The Riemannian optimization problems~\eqref{eq:RRT2} and~\eqref{eq:RRT22} are solved iteratively using the \emph{truncated-Newton trust-region} \change{}{(TNT)}  method. We refer the reader to~\cite{Rosen17irosws-SEsyncImplementation} for a description of the implementation of a truncated-Newton trust-region method. 
\omitted{ 
Although the solution of~\eqref{eq:RRT2} and~\eqref{eq:RRT22} is independent of the initialization, we generally initialize $\MR$ using the unary potentials of the MRF to reduce convergence time. 
 }As in~\cite{Rosen17irosws-SEsyncImplementation}, we use the Lanczos algorithm to check that~\eqref{eq:RRT2} and~\eqref{eq:RRT22} converged to rank-deficient second-order critical points, which are optimal according to Proposition~\ref{prop:boumal} and Proposition~\ref{prop:boumal2}, respectively.
If the optimality condition is not met, the algorithm proceeds to the next step of the Riemannian staircase, repeating the optimization with the rank $r$  increased by 1. \change{}{In all experiments, \fuses found an optimal solution within the first two iterations of the staircase, while we}  
 observed that the rank in \dars (initially $r=2$) sometimes increases to $6-8$.
\isExtended{}{
\change{}{In \dars, we limit the number of dual ascent iterations to $T=1000$, and we terminate iterations when the gradient in~\eqref{eq:gradAscent} has norm smaller than $0.5$. Using a constant stepsize $\alpha = 0.005$ ensured convergence in all tests. The interested reader can find a list of all relevant parameters of our implementation in~\cite{Hu18tr-fuses}.}} 
\change{}{Our implementation has been made available online at \url{https://github.mit.edu/SPARK/sdpSegmentation}.}

\isExtended{
\change{}{\myparagraph{Parameter Choice} 
The proposed techniques are based in the Riemannian staircase method and they look for rank-deficient solutions for increasing values of the rank $r$. As mentioned above, 
we use the initial value of $r=2$ for \dars and $r = \nrClasses+1$ for \fuses. 
After solving the rank-constrained SDP for a given value of $r$, each technique checks if the resulting solution is rank deficient (in which case an optimal solution is found), or the techniques moves to the next step of the staircase ($r \leftarrow r+1$). The thresholds used to check that the Riemannian optimization converged (gradient norm and relative decrease in the objective) and that
the resulting solution is rank deficient (eigenvalue tolerance) are given in Table~\ref{table:TNT_params}.
The table also reports parameters governing the trust-region method (maximum number of iterations, initial radius and parameters $\alpha_1$ and $\alpha_2$ the decide how to change the radius), as well
as the Conjugate Gradient (CG) solver used within TNT.
In \dars, we also limit the maximum number of dual ascent iterations to $T=1000$, and we terminate iterations when the gradient in~\eqref{eq:gradAscent} has norm smaller than $0.5$. We adopted a stepsize $\alpha = 0.005$ for the dual ascent iterations.}

\begin{table}[h]
	\centering
	\begin{tabular}{|l|c|c|}
	\hline
	Parameters & \dars & \fuses \\
	\hline
	\hline
	Initial rank &  $2$ & $\nrClasses+1$ \\
	\hline
	Gradient norm tolerance &  {$1\text{e-}{3}$} & {$1\text{e-}{2}$}\\
	\hline
	Eigenvalue tolerance & \multicolumn{2}{|c|}{$1\text{e-}{2}$} \\
	\hline
	Relative decrease in function value tolerance &  \multicolumn{2}{|c|}{$1\text{e-}{5}$}\\
	\hline
		Max TNT iterations & \multicolumn{2}{|c|}{$500$} \\
		\hline
		Initial trust-region radius ($\delta_0$) & \multicolumn{2}{|c|}{$1$} \\
		\hline
		Trust region decrease factor ($\alpha_1$) & \multicolumn{2}{|c|}{$0.25$}  \\
		\hline
		Trust region increase factor ($\alpha_2$)  & \multicolumn{2}{|c|}{$2.5$}  \\
		\hline
		Max GC iterations & \multicolumn{2}{|c|}{$2000$} \\
		\hline
		Successful CG step ($\eta$) & \multicolumn{2}{|c|}{$0.9$}  \\
		\hline
		Max dual ascent iterations ($T$) & 1000 & -   \\
		\hline
		Dual ascent gradient tolerance & 0.5 & -   \\
		\hline
		Dual ascent gradient stepsize & 0.005 & -   \\
		\hline
	\end{tabular}
	\caption{Parameters used in \dars and \fuses.\label{table:TNT_params}}
\end{table}
}{}

\subsection{Setup, Compared Techniques, and Performance Metrics}

\myparagraph{Setup} We evaluate \fuses and \dars using the \emph{Cityscapes dataset}~\cite{Cordts16cvpr-cityscapes}, 
which contains a large collection of images of urban scenes 
with pixel-wise semantic annotations. The annotations include 30 semantic classes (e.g., road, sidewalk, person, car, building, vegetation). 
We first extract superpixels from the images using OpenCV (we obtain around 1000 superpixels per image, unless specified otherwise). Then, the unary terms are obtained using the \change{}{\modelOne pretrained model from} \emph{Bonnet}~\cite{Milioto18arxiv-bonnet}, which uses a CNN to obtain pixel-wise segmentation. \change{}{We restrict our evaluation to 19 (out of 30) classes to be consistent with \emph{Bonnet}. Moreover, we create the unary terms in the MRF by picking the three most likely labels (averaged across all pixels in a superpixel) returned by Bonnet.}
 \change{}{Bonnet achieves 52.65\% accuracy on the Cityscapes dataset, while the accuracy drops after restricting the labeling to the superpixels (see Tables~\ref{table-statistics-512-1K} and~\ref{table-statistics-512-2K}).}
Bonnet returns noisy labels for each superpixel and the role of the MRF is to refine the segmentation by encouraging smoothness of nearby labels. 
\change{In practice, since CNNs are typically inaccurate at the boundary between different objects, we expect the use of superpixels and MRF to improve the segmentation results. }{}
The binary potentials are modeled as $\penaltyTerm_{ij} = \lambda_1 + \lambda_2\exp(-\beta||c_i-c_j||^2_2)$~\cite[Section 7.2]{Blake11book-MRF}, where $c_i$ denotes the average color vector in superpixel $i$, $\lambda_1$ and $\lambda_2$ are parameters to tune, and
$\beta = (2\langle\|c_i-c_j\|^2_2\rangle)^{-1}$ where "$\langle\cdot\rangle$" represents the sample mean. In our tests, we set \change{$\lambda_1 = 0.04$, $\lambda_2 = 0.22$, and $\beta = 0.00024$}{$\lambda_1 = 0.02$, $\lambda_2 = 0.04$, and $\beta = 0.000173$; $\lambda_1$ and $\lambda_2$ 
are tuned to maximize the accuracy of the optimal solution of~\eqref{eq:MRF1} (computed from CPLEX, see below) on the training data.}
\omitted{, at which the exact solution achieves over $99\%$ accuracy in the Cityscapes' Munster dataset. Snapshots of the Cityscapes segmentation are given in~\Fig{fig-segmentation-snapshots}.}
\omitted{
The parameters of the binary potentials in the CRF are trained using the Munster dataset in Cityscapes validation set, which constains 174 images. Here, The training set is not used, since Bonnet is pre-trained on these datasets. The test set is not used for this paper because the ground-truth labels are not publicly available. In addition, the main purpose of this paper is to compare the performance of FUSES against other state-of-the-art inference algorithm in optimizing a CRF. Therefore, using validation set is still fair among all techniques.
For evaluation,  we use the Lindau dataset in Cityscapes, which is another validation set containing 59 images with similar color variance across neigheboring superpixel pairs.
}

\myparagraph{Compared techniques}
We compare the proposed techniques against three state-of-the-art methods: 
\emph{$\alpha$-expansion}~\cite{Boykov01pami-graphCut} (label: \aexp).
\emph{Loopy Belief Propagation}~\cite{Weiss01itis-beliefPropagation}  (label: \LBP)
and \emph{Tree-Reweighted Message Passing}~\cite{Wainwright05itis} (label: \TRW).
\omitted{\aexp is an expansion-move-based method, while  \LBP and \TRW are message-passing methods.} 
We use the implementation of these methods available in the newly-released  \openGM library~\cite{OpenGM2website}.

\myparagraph{Performance metrics}
We evaluate the results in terms of suboptimality, accuracy, and CPU time. 
We measure the suboptimality using three metrics: the \emph{percentage of optimal labels}, the \emph{percentage relaxation gap}, 
and the \emph{percentage rounding gap}. The optimal labels are those that agree with the optimal solution of~\eqref{eq:MRF1}.
The relaxation gap is $(\foptmp - \fsdpmp) / \foptmp$ for \dars, and $(\foptzo - \fsdpzo) / \foptzo$ for \fuses. 
The rounding gap is $(\froundedmp - \foptmp) / \foptmp$ for \dars, and $(\froundedzo - \foptzo) / \foptzo$ for \fuses. 
We compute the optimal labels (and the corresponding optimal objective) using a commercial tool for integer programming, \CPLEX~\cite{CPLEXwebsite}. 
The runtime of \CPLEX increases exponentially in the problem size hence we can only use it offline for benchmarking the proposed solvers.
We measure the accuracy using the~\emph{Intersection over Union} (IoU) metric\change{~\cite{GarciaGarcia17arxiv}}{~\cite{Cordts16cvpr-cityscapes}}, 
and record the CPU time for each compared technique. 



\begin{figure}[t]{}
	\begin{center}
	\begin{tabular}{cc}%
	\hspace{-0.5cm}
			\begin{minipage}{4.5cm}%
			\centering%
			\includegraphics[width=1.1\columnwidth,trim=0mm 0mm 200mm 0mm,clip]{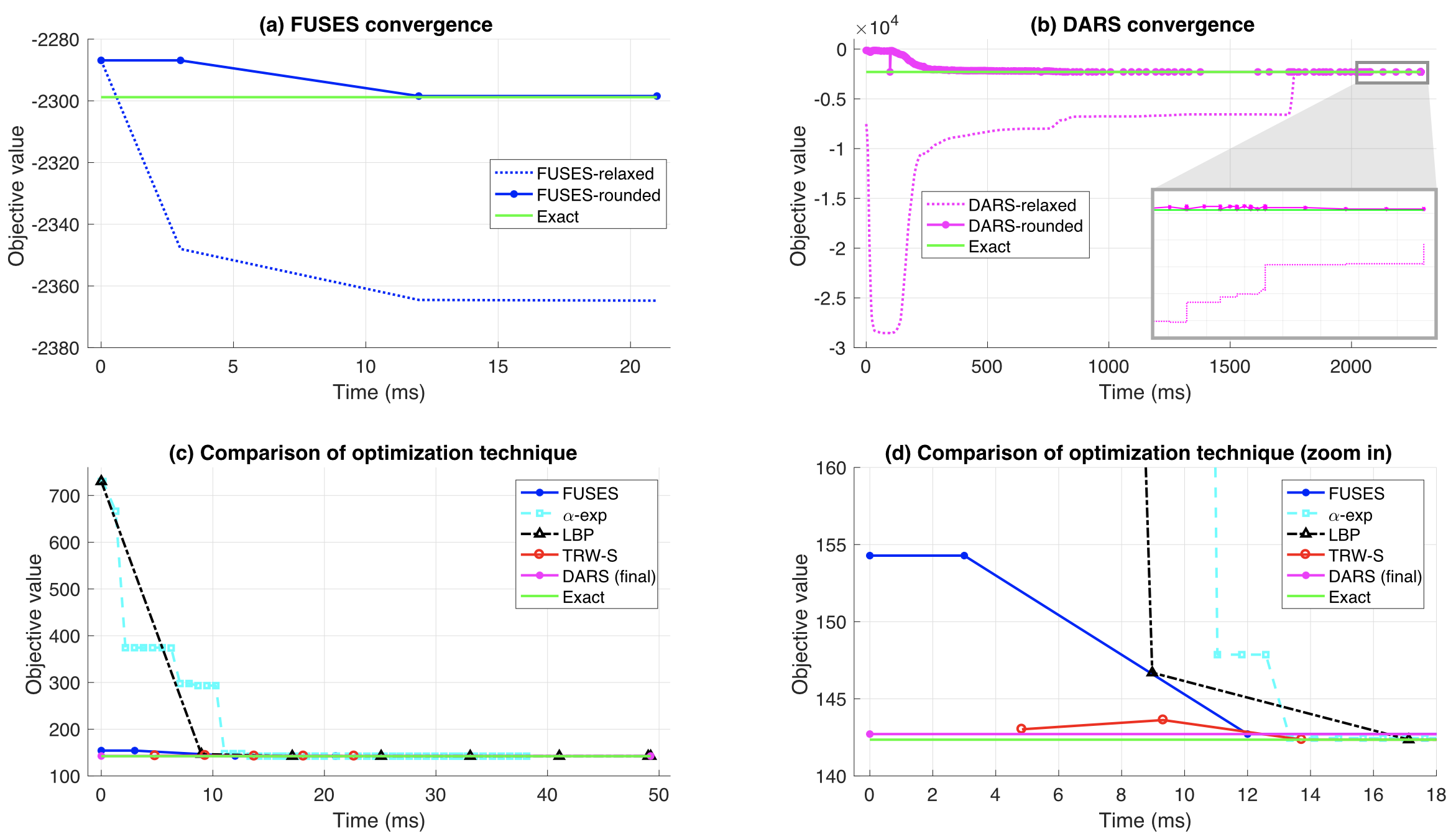}
			\end{minipage}
		& 
		\hspace{-0.6cm}
			\begin{minipage}{4.5cm}%
			\centering%
			\includegraphics[width=1.1\columnwidth,trim=210mm 0mm -10mm 0mm,clip]{convergence_v22}
			\end{minipage}
		\end{tabular}
		\caption{Convergence results for a single test image: 
		objective value over time (in milliseconds) for all the compared techniques. (a) Objective vs. time for~\eqref{eq:homBinaryMat}; 
		(b) Objective vs. time for~\eqref{eq:MRFmp3}; (c)-(d) Objective vs. time for~\eqref{eq:MRF1}.\label{fig-convergence-example}}
		\vspace{-8mm}
	\end{center}
\end{figure}

\subsection{Semantic Segmentation Results}\label{sec:exp-results}
 \Fig{fig-convergence-example} shows a typical execution of the algorithms for a single image in the Cityscapes \omitted{lindau}dataset.
\Fig{fig-convergence-example}(a) shows the convergence of \fuses, reporting 
the relaxed objective attained by iteratively solving~\eqref{eq:RRT22} (\fuses-relaxed),
		the objective of the corresponding rounded estimate at each iteration (\fuses-rounded), and the optimal cost attained by \CPLEX (Exact).
		The approach converges in few milliseconds, and the corresponding rounded estimate settles near the optimal objective.
\Fig{fig-convergence-example}(b) shows the convergence of \dars, reporting 
the relaxed objective attained by~\eqref{eq:RRT2} (\dars-relaxed),
		the objective of the corresponding rounded estimate (\dars-rounded), and the optimal cost from \CPLEX (Exact).
\omitted{The relaxed cost does not decrease monotonically; indeed the dual ascent method constrains the estimate more and more, hence the 
relaxed objective tends to increase and approach the optimal objective.}\dars' relaxed cost does not decrease monotonically.
Moreover, its convergence time is around two orders of magnitude slower than 
\fuses. 

\Fig{fig-convergence-example}(c) shows all the compared techniques, while \Fig{fig-convergence-example}(d) provides a zoomed-in 
view restricted to the first \zoomTime. We only report the final cost for \dars, whose convergence is much slower than all the other methods. 
From \Fig{fig-convergence-example}(c)-(d) we note that \aexp, \LBP, an \TRW perform well in segmentation problems \change{. 
While not providing any optimality guarantee (\LBP and \TRW may not even converge to a local optimum), these techniques return near-optimal 
solutions in all the tested images.}{ and indeed return near-optimal 
solutions in all the tested images.} 
\aexp and \LBP have longer convergence tails but typically obtain a smaller value than \fuses and \dars.
 \TRW also requires more time to terminate but attains a near-optimal objective in few iterations. 
 \fuses is farther from optimal (see also Tables~\ref{table-statistics-512-1K}-\ref{table-statistics-512-2K}), but it is the only technique that does not require any initial guess. 
 \fuses attains an objective comparable to the one of \dars, 
 while being much faster. 

\change{\input{table-statistics2}}{
\begin{table}[h]
	\vspace{-0.2cm}
	\centering
	\resizebox{\columnwidth}{!}{%
	\begin{tabular}{|c|c|c|c|c|c|}
		\hline
		\multirow{3}{*}{Method} & \multicolumn{3}{c|}{Suboptimality} & \multirow{2}{*}{Accuracy}  & \multirow{2}{*}{Runtime} \\
		\cline{2-4} & Optimal & Relax &  Round &   &   \\
		&  Labels (\%) &  Gap (\%) & Gap (\%) & (\% IoU) &  (ms)  \\
		\hline
		\fuses & $99.17$ & $2.584$ & $0.047$ & $49.46$ & $16.15$ \\
		\hline
		\dars & $99.68$  & $0.210$ & $0.010$  & $49.43$ & $683.77$ \\
		\hline
		\aexp & $99.93$ & - & $8.08\text{e-}4$ & $49.38$ & $65.78$ \\
		\hline
		\LBP & $99.99$ & - & $1.74\text{e-}4$ & $49.38$ & $106.48$ \\
		\hline
		\TRW & $100.00$ & - & $-8.31\text{e-}6$ & $49.36$ & $41.64$ \\
		\hline
		\emph{Bonnet} (SP)  & - & - & -& $48.08$ & - \\
		\hline
	\end{tabular}
	}
	\caption{Performance on the Cityscapes dataset (1000 superpixels).\label{table-statistics-512-1K}}
\isExtended{}{\vspace{-2mm}}
\end{table}

\begin{table}[h]
	\vspace{-0.4cm}
	\centering
	\resizebox{\columnwidth}{!}{%
	\begin{tabular}{|c|c|c|c|c|c|}
		\hline
		\multirow{3}{*}{Method} & \multicolumn{3}{c|}{Suboptimality} & \multirow{2}{*}{Accuracy}  & \multirow{2}{*}{Runtime} \\
		\cline{2-4} & Optimal & Relax &  Round &   &   \\
		&  Labels (\%) &  Gap (\%) & Gap (\%) & (\% IoU) &  (ms)  \\
		\hline
		\fuses & $99.17$  & $2.331$ & $0.050$  & $51.37$ & $40.20$ \\
		\hline
		\dars & $99.68$  & $0.163$ & $0.011$  & $51.27$ & $1700.45$ \\
		\hline
		\aexp & $99.93$  & - & $1.02\text{e-}3$  & $51.22$ & $145.48$ \\
		\hline
		\LBP & $99.99$  & - & $1.57\text{e-}4$  & $51.23$ & $250.58$ \\
		\hline
		\TRW & $100.00$  & - & $-3.31\text{e-}6$  & $51.21$ & $99.19$ \\
		\hline
		\emph{Bonnet} (SP)  & - & - & -& $50.28$ & - \\
		\hline
	\end{tabular}
	}
	\caption{Performance on the Cityscapes dataset (2000 superpixels).\label{table-statistics-512-2K}}
	\vspace{-7mm}
\end{table}
}

Tables~\ref{table-statistics-512-1K} and~\ref{table-statistics-512-2K}
report statistics describing the performance of the compared techniques on the \change{Cityscapes' Lindau dataset over \change{59}{500} images 
(we use approximately 1000 superpixels). }{\emph{Cityscapes dataset, when using 1000 and 2000 superpixels, respectively}.} 
\omitted{
we chose this dataset for testing, due to its similarity in the color variance $\beta$ to the Munster dataset we used for training.}
We show the percentage of optimal labels (``Optimal Labels'' column), the relaxation gap (``Relax Gap'' column), and the rounding gap (``Round Gap'' column). 
The tables show that \fuses and \dars have comparable suboptimality (typically larger than the other compared techniques). 
\fuses and \dars produce optimal assignments for most of the nodes in the MRF, 
and attain a rounded cost within \percSubopt of the optimum.
 The IoU (``Accuracy'' column) shows that all the techniques have comparable accuracy (around \change{$34\%$}{$49.4\%$ for 1000 superpixels}). 
 \change{All the compared techniques produce more accurate results than the 
 CNN-based segmentation produced by Bonnet, which has IoU equal to $30.11\%$ on this dataset. 
}{All the compared techniques outperform the Bonnet solution restricted to the superpixels ($48.08\%$ in Table~\ref{table-statistics-512-1K}) by a small margin of $1.4\%$. However, their accuracy is inferior to the original Bonnet solution ($52.65\%$, not restricted to superpixels). This is due to the fact that the MRF solution heavily depends on the quality of the superpixels and on the model used for the binary terms. While improving these aspects is outside the scope of this work (we focus on solving the MRF, rather than building it from data), in \isExtended{Section~\ref{sec:enhanced}}{the supplemental material}  we provide extra results to show that having more accurate superpixels and binary terms can boost the IoU above $70\%$.  
} 

\change{}{\fuses is the fastest MRF solver (more than 2x faster than \TRW) and can compute a solution in milliseconds, while 
not relying on any initial guess. 
\Table{table-statistics-512-2K} also shows that 
\fuses scales better than other techniques.}
\Fig{fig-segmentation-snapshots} shows qualitative segmentation results obtained using the proposed techniques. 
We also attempted to use a general-purpose SDP solver,~\cvx~\cite{CVXwebsite}, for our evaluation: with only 200 superpixels, \cvx requires more than 50 minutes 
to solve~\eqref{eq:SDPstandard}, while for 1000 superpixels it crashes due to excessive memory usage.

\Fig{fig:relaxation-gap}(a) shows the relaxation gap for \fuses and \dars for increasing number of nodes; we control the number of nodes 
by controlling how many superpixels each image is divided into. The relaxation gap decreases for increasing number of nodes, which 
is a desirable feature since one typically solves large problems (>1000 nodes). The relaxation gap in \fuses is slightly larger: in hindsight, 
we traded-off suboptimality for fast computation. \Fig{fig:relaxation-gap}(b) shows the relaxation gap for \fuses and \dars for
 increasing number of labels; we artificially reduce the number of labels in Cityscapes for this test. 
 The quality of both relaxations does not degrades significantly for increasing number of labels.
 \omitted{Both relaxations exhibit 
 a smaller relaxation gap when there are very few labels and then settle to a constant value for increasing number of labels.} 


\begin{figure}[h!]
\vspace{-4mm}

\includegraphics[width=\columnwidth]{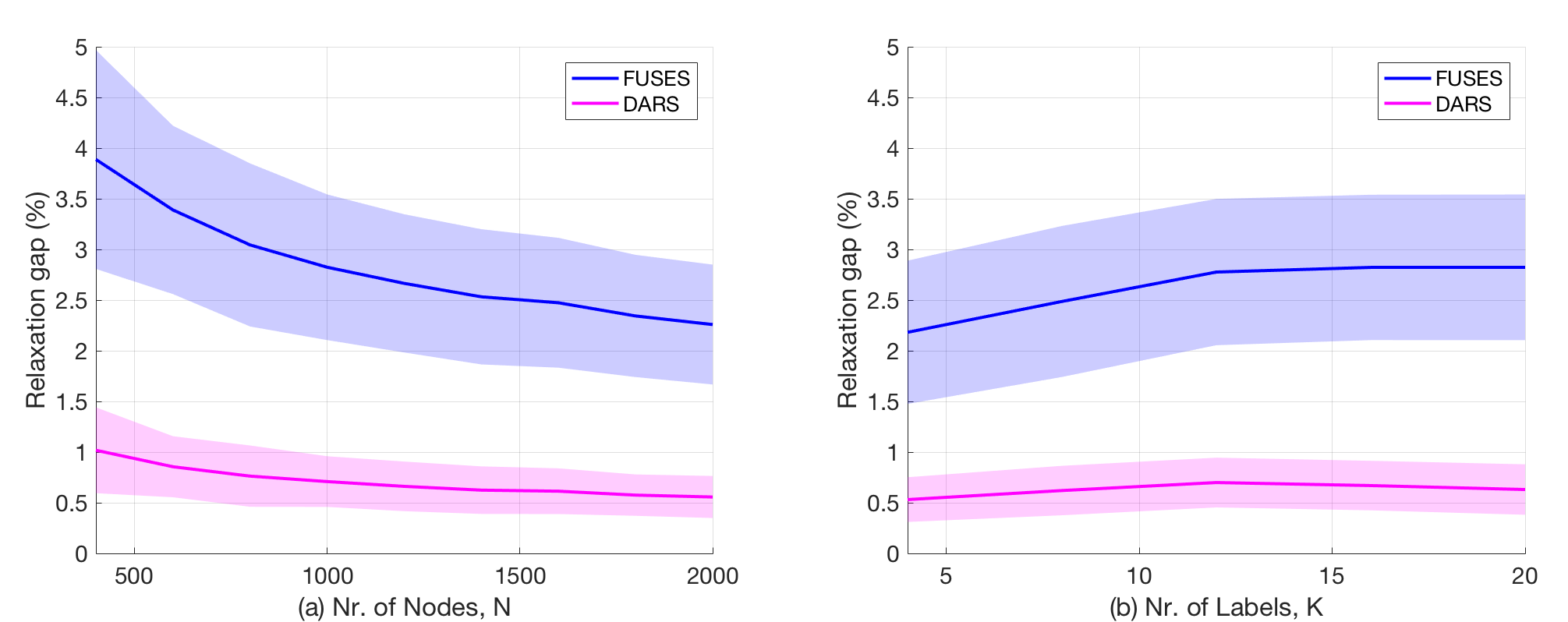}
\vspace{-6mm}
\caption{Relaxation gap for \fuses and \dars for (a) increasing number of nodes and (b) increasing number of labels. 
The shaded area describes the 1-sigma standard deviation.
\label{fig:relaxation-gap}}
\end{figure}

\isExtended{
\change{}{
\subsection{Semantic Segmentation: Results with Enhanced MRFs}\label{sec:enhanced}
The main focus of this work is to design a fast MRF solver. 
However, when used for semantic segmentation, the accuracy (IoU) of the resulting solution may be implicitly limited by the choice of unary and binary potentials, as well as by the choice of nodes (superpixels). 
For instance, a superpixel including pixels belonging to different classes implicitly induces errors in the segmentation and the MRF cannot do anything to mitigate that issue (each superpixel can only be assigned a single label). This section presents an extra set of results 
to test the performance of the MRF solution for increasing quality of the unary potentials, binary potentials, and superpixels. While the tests for improved unary factors are realistic (we use a slower but more accurate model from Bonnet), the tests with improved binary potentials and superpixels rely on an idealized (and unrealistic) setup; in particular we create improved binary potentials and superpixels using the knowledge of the ground truth labels. While this setup is not implementable in practice, we believe it provides interesting insights on the potential performance of MRF-based segmentation (and \fuses) when the construction of the MRF is less naive than the one in Section~\ref{sec:exp-results}. For these extra results, we omit \dars, which is currently slow compared to the other techniques.   

\begin{table}[h]
	\vspace{-0.2cm}
	\centering
	\resizebox{\columnwidth}{!}{%
	\begin{tabular}{|c|c|c|c|c|c|}
		\hline
		\multirow{3}{*}{Method} & \multicolumn{3}{c|}{Suboptimality} & \multirow{2}{*}{Accuracy}  & \multirow{2}{*}{Runtime} \\
		\cline{2-4} & Optimal & Relax &  Round &   &   \\
		&  Labels (\%) &  Gap (\%) & Gap (\%) & (\% IoU) &  (ms)  \\
		\hline
		\fuses & $99.50$ & $2.31$ & $0.025$ & $60.96$ & $16.85$ \\
		\hline
		\aexp & $99.97$ & - & $6.22\text{e-}4$ & $60.98$ & $51.90$ \\
		\hline
		\LBP & $99.99$ & - & $1.69\text{e-}4$ & $60.99$ & $60.59$ \\
		\hline
		\TRW & $100.00$ & - & $8.66\text{e-}6$ & $60.99$ & $19.94$ \\
		\hline
		\emph{Bonnet} (SP)  & - & - & -& $60.85$ & - \\
		\hline
		\emph{Bonnet} (P)  & - & - & -& $61.23$ & - \\
		\hline
	\end{tabular}
	}
	\caption{Performance on the Cityscapes dataset \hspace{5cm}(1000 superpixels, \emph{Bonnet} \modelTwo model).\label{table-statistics-1024-1K}}
\end{table}

\begin{table}[h]
	\vspace{-0.4cm}
	\centering
	\resizebox{\columnwidth}{!}{%
	\begin{tabular}{|c|c|c|c|c|c|}
		\hline
		\multirow{3}{*}{Method} & \multicolumn{3}{c|}{Suboptimality} & \multirow{2}{*}{Accuracy}  & \multirow{2}{*}{Runtime} \\
		\cline{2-4} & Optimal & Relax &  Round &   &   \\
		&  Labels (\%) &  Gap (\%) & Gap (\%) & (\% IoU) &  (ms)  \\
		\hline
		\fuses & $99.43$ & $2.036$ & $0.034$ & $61.36$ & $36.43$ \\
		\hline
		\aexp & $99.96$ & - & $5.80\text{e-}4$ & $61.31$ & $112.01$ \\
		\hline
		\LBP & $99.99$ & - & $1.20\text{e-}4$ & $61.32$ & $166.62$ \\
		\hline
		\TRW & $100.00$ & - & $4.64\text{e-}6$ & $61.32$ & $57.52$ \\
		\hline
		\emph{Bonnet} (SP)  & - & - & -& $61.16$ & - \\
		\hline
		\emph{Bonnet} (P)  & - & - & -& $61.23$ & - \\
		\hline
	\end{tabular}
	}
	\caption{Performance on the Cityscapes dataset \hspace{5cm}(2000 superpixels, \emph{Bonnet} \modelTwo model).\label{table-statistics-1024-2K}}
	\vspace{-5mm}
\end{table}

\myparagraph{Improved Unary Potentials} One way to improve the accuracy of the MRF segmentation is to use a better CNN model for the unary potentials. In Section~\ref{sec:exp-results}, we ran experiments with the \emph{Bonnet} \modelOne model which strikes a good balance between accuracy and computational cost. 
In this section, we report extra results using a more accurate model, \emph{Bonnet} \modelTwo, which requires four times more operations but achieves $61.23\%$ accuracy (pixel-wise).   
We used the Bonnet results to create improved unary potentials in the MRF and repeated the tests in Section~\ref{sec:exp-results}, keeping the parameters as in the previous tests.
 The results are reported in Tables~\ref{table-statistics-1024-1K} and~\ref{table-statistics-1024-2K}. The tables report both the pixel-wise accuracy of Bonnet (``Bonnet (P)'') and the accuracy of the induced superpixel segmentation (``Bonnet (SP)'').
Table~\ref{table-statistics-1024-1K} confirms that \fuses is fast but slightly less optimal than the other  techniques. Table~\ref{table-statistics-1024-2K} shows that with 2000 superpixels the MRF solution outperforms (by a small margin) the pixel-wise accuracy of Bonnet ($61.23\%$). Moreover, the accuracy of all techniques is approximately $10\%$ higher than the results in Tables~\ref{table-statistics-512-1K} and~\ref{table-statistics-512-2K}.


\begin{table}[h]
	\vspace{-0.2cm}
	\centering
	\resizebox{\columnwidth}{!}{%
	\begin{tabular}{|c|c|c|c|c|c|}
		\hline
		\multirow{3}{*}{Method} & \multicolumn{3}{c|}{Suboptimality} & \multirow{2}{*}{Accuracy}  & \multirow{2}{*}{Runtime} \\
		\cline{2-4} & Optimal & Relax &  Round &   &   \\
		&  Labels (\%) &  Gap (\%) & Gap (\%) & (\% IoU) &  (ms)  \\
		\hline
		\fuses & $99.06$ & $0.947$ & $0.131$ & $62.67$ & $23.97$ \\
		\hline
		\aexp & $99.97$ & - & $7.39\text{e-}3$ & $63.33$ & $46.41$ \\
		\hline
		\LBP & $99.89$ & - & $3.36\text{e-}3$ & $63.26$ & $114.87$ \\
		\hline
		\TRW & $100.00$ & - & $7.51\text{e-}6$ & $63.32$ & $38.07$ \\
		\hline
		\emph{Bonnet} (SP)  & - & - & -& $60.85$ & - \\
		\hline
		\emph{Bonnet} (P)  & - & - & -& $61.23$ & - \\
		\hline
	\end{tabular}
	}
	\caption{Performance on the Cityscapes dataset \hspace{5cm}(1000 superpixels, \emph{Bonnet} \modelTwo model, improved binary).\label{table-statistics-1024-1K-Binary}}
	\vspace{-5mm}
\end{table}
\begin{table}[h]
	\vspace{-0.2cm}
	\centering
	\resizebox{\columnwidth}{!}{%
	\begin{tabular}{|c|c|c|c|c|c|}
		\hline
		\multirow{3}{*}{Method} & \multicolumn{3}{c|}{Suboptimality} & \multirow{2}{*}{Accuracy}  & \multirow{2}{*}{Runtime} \\
		\cline{2-4} & Optimal & Relax &  Round &   &   \\
		&  Labels (\%) &  Gap (\%) & Gap (\%) & (\% IoU) &  (ms)  \\
		\hline
		\fuses & $98.93$ & $0.944$ & $0.156$ & $63.28$ & $45.87$ \\
		\hline
		\aexp & $99.97$ & - & $5.45\text{e-}4$ & $64.13$ & $91.94$ \\
		\hline
		\LBP & $99.88$ & - & $4.16\text{e-}3$ & $64.02$ & $296.94$ \\
		\hline
		\TRW & $100.00$ & - & $0.00$ & $64.09$ & $98.62$ \\
		\hline
		\emph{Bonnet} (SP)  & - & - & -& $61.16$ & - \\
		\hline
		\emph{Bonnet} (P)  & - & - & -& $61.23$ & - \\
		\hline
	\end{tabular}
	}
	\caption{Performance on the Cityscapes dataset \hspace{5cm}(2000 superpixels, \emph{Bonnet} \modelTwo model, improved binary).\label{table-statistics-1024-2K-Binary}}
	\vspace{-5mm}
\end{table}

\myparagraph{Improved Binary Potentials} This section evaluates the performance of the MRF-based semantic segmentation techniques when computing the binary potentials from the ground truth labels. This is not a realistic setup and is used to obtain an ``upper bound'' on the performance of the MRF solvers when the binary potentials are accurate. 
Therefore, for the tests in this section, we evaluated all compared techniques using 
synthetic binary potentials obtained as follows: for a pair of superpixels $i$ and $j$, we set $\penaltyTerm_{ij}=0.2$ for nearby superpixels with the same ground truth labels and $\penaltyTerm_{ij}=0$ otherwise. Tables~\ref{table-statistics-1024-1K-Binary} and~\ref{table-statistics-1024-2K-Binary} 
confirm our findings so far (\fuses is the fastest technique but it is slightly less accurate than competitors) but also stress that an improved choice of binary potentials can further boost accuracy. In particular, Table~\ref{table-statistics-1024-2K-Binary}  shows that \fuses is approximately $2\%$ better than Bonnet with this improved binary potentials. 
We remark that while in the Cityscapes dataset we can only use pixel information to create the binary potentials, in 
several robotics applications, one can leverage other sources of information, e.g., the geometry of the scene estimated from SLAM, to obtain better binary potentials.

\begin{table}[h]
	\vspace{-0.2cm}
	\centering
	\resizebox{\columnwidth}{!}{%
	\begin{tabular}{|c|c|c|c|c|c|}
		\hline
		\multirow{3}{*}{Method} & \multicolumn{3}{c|}{Suboptimality} & \multirow{2}{*}{Accuracy}  & \multirow{2}{*}{Runtime} \\
		\cline{2-4} & Optimal & Relax &  Round &   &   \\
		&  Labels (\%) &  Gap (\%) & Gap (\%) & (\% IoU) &  (ms)  \\
		\hline
		\fuses & $99.54$  & $0.899$ & $0.059$  & $72.23$ & $14.89$ \\
		\hline
		\aexp & $99.98$  & - & $4.20\text{e-}4$  & $72.73$ & $34.56$ \\
		\hline
		\LBP & $99.94$  & - & $1.30\text{e-}3$  & $72.70$ & $64.40$ \\
		\hline
		\TRW & $100.00$  & - & $-7.18\text{e-}6$  & $72.71$ & $27.10$ \\
		\hline
		\emph{Bonnet} (SP)  & - & - & -& $69.53$ & - \\
		\hline
		\emph{Bonnet} (P)  & - & - & -& $61.23$ & - \\
		\hline
	\end{tabular}
	}
	\caption{Performance on the Cityscapes dataset (1000 superpixels, \emph{Bonnet} \modelTwo model, improved binary and superpixel).\label{table-statistics-1024-1K-SP}}
\end{table}

\begin{table}[h]
	\vspace{-0.4cm}
	\centering
	\resizebox{\columnwidth}{!}{%
	\begin{tabular}{|c|c|c|c|c|c|}
		\hline
		\multirow{3}{*}{Method} & \multicolumn{3}{c|}{Suboptimality} & \multirow{2}{*}{Accuracy}  & \multirow{2}{*}{Runtime} \\
		\cline{2-4} & Optimal & Relax &  Round &   &   \\
		&  Labels (\%) &  Gap (\%) & Gap (\%) & (\% IoU) &  (ms)  \\
		\hline
   		\fuses & $99.47$  & $0.898$ & $0.072$  & $72.50$ & $34.91$ \\
		\hline
		\aexp & $99.98$  & - & $3.48\text{e-}4$  & $73.34$ & $74.36$ \\
		\hline
		\LBP & $99.94$  & - & $1.99\text{e-}3$  & $73.28$ & $171.95$ \\
		\hline
		\TRW & $100.00$  & - & $-5.80\text{e-}7$  & $73.33$ & $74.11$ \\
		\hline
		\emph{Bonnet} (SP)  & - & - & -& $68.89$ & - \\
		\hline
		\emph{Bonnet} (P)  & - & - & -& $61.23$ & - \\
		\hline
	\end{tabular}
	}
	\caption{Performance on the Cityscapes dataset (2000 superpixels, \emph{Bonnet} \modelTwo model, improved binary and superpixel).\label{table-statistics-1024-2K-SP}}
	\vspace{-5mm}
\end{table}
\myparagraph{Improved Superpixels} We can boost performance even further by considering an enhanced  superpixel segmentation. For these tests, we consider a ``ground truth'' superpixel segmentation where each superpixel is constrained to include pixels from a single class; for this purpose we obtain the superpixel segmentation from the ground truth labeling of the Cityscapes images. 
With the 
improved superpixels, we attain an accuracy higher than $72\%$ for all the MRF-based techniques, with an improvement of $11\%$ with respect to Bonnet ($61.23\%$). 
While this setup leverages ground truth labels, hence it is not implementable in practice, one can envision to inform the superpixel creation with geometric information (e.g., 3D reconstruction from SLAM), so to only cluster together pixels  picturing nearby points in 3D. 
}
}{}

\isExtended{

\section{Related Work}
\label{sec:relatedWork}

This section reviews inference techniques (\prettyref{sec:rw-techniques-exact}-\ref{sec:rw-techniques-approx}) and applications
 (\prettyref{sec:rw-applications}) for pairwise MRFs including
 work on semantic segmentation. 
Our presentation is based on~\cite{Szeliski08pami-surveyMRF,Blake11book-MRF,Kappes15ijcv-energyMin} but also covers more recent work on MRFs and semantic segmentation. 

\subsection{Exact Inference in MRFs}
\label{sec:rw-techniques-exact}


\myparagraph{Efficient Algorithms}
Inference in MRFs is intractable in general. However, particular instances of the problem 
are solvable in polynomial time. 
In particular, the Ising model 
can be solved exactly in polynomial time via graph cut~\cite{Ivanescu65or-graphCut,Greig89jrss-graphCut}. Note that graph cut algorithms are exact when binary potentials are ``attractive'', i.e., $\penaltyTerm_{ij} \geq 0$ in~\eqref{eq:binary} (priors encourage nearby nodes to have the same label). MRFs with \emph{repulsive} potentials ($\penaltyTerm_{ij} < 0$) are intractable in general~\cite{Kolmogorov04pami-graphCut}. 
A more general (necessary and sufficient) condition that ensures optimality of graph cut for binary pairwise MRFs with classes $\classSet=\{0;1\}$ is 
the \emph{regularity} condition: 
\beq
\label{eq:regularity}
E_{ij}(0,0) + E_{ij}(1,1) \leq E_{ij}(0,1) + E_{ij}(1,0)
\eeq
for any $(i,j)\in \calB$, see Lemma 3.2 and Theorem 3,1 in~\cite{Kolmogorov04pami-graphCut}. 
The regularity condition in eq.~\eqref{eq:regularity} is a special case of \emph{submodularity}, and indeed the 
corresponding potentials are also called \emph{submodular}~\cite{Kolmogorov04pami-graphCut,Jegelka11cvpr,Felzenszwalb11pami}. 

For multi-label pair-wise MRFs, exact solutions exist for the case when the binary potentials are convex functions of the labels~\cite{Ishikawa03pami-MRF,Ishikawa98eccv-MRF,Boykov98cvpr-MRF} and for the case where the binary potentials are linear and the unary potentials are convex~\cite{Hochbaum01acm-MRF}. 
We remark that these approaches assume a linear ordering of the labels, where the 
potentials penalize node labels depending on their label distance $|\class_i - \class_j|$; 
this means that choosing $\class_i=1$ and $\class_j=3$ incurs a larger penalty than choosing 
$\class_i=1$ and $\class_j=2$; on the other hand, the Potts model in eq.~\eqref{eq:binary}
 penalizes in the same way any class mismatch $\class_i \neq \class_j$.
 Assuming a linear ordering is often unrealistic in practice; for instance, in semantic segmentation the classes (e.g., cat, table, car) do not admit a linear order in general. 
 Moreover, convexity is a strong assumption for several MRF applications, such as depth reconstruction, where nonconvex costs have the desirable property of being \emph{discontinuity-preserving}~\cite{Kolmogorov04pami-graphCut} contrarily to convex ones, which tend to smooth out depth discontinuities. 
 Inference in multi-class MRF based on the Potts model is NP-hard, see~\cite{Boykov01pami-graphCut}. 

In the special case where the topology of the MRF is a \emph{chain} 
(e.g., when the MRF describes a 1D signal or sequence), or more generally a \emph{tree},  
\emph{Dynamic Programming} provides an optimal MAP estimate in polynomial time, see~\cite{Felzenszwalb11pami,Veksler05cvpr}. 
Related work~\cite{Amit96pami,Felzenszwalb05pami} also extends dynamic programming to certain families of graphs with cycles and small cliques.

\myparagraph{Global Integer Solvers} 
 The energy minimization problem~\eqref{eq:MRF1} is a quadratic integer program and can be 
 easily reformulated as a binary optimization problem~\cite{Boros02dam-booleanOpt,Schrijver86book-integerProgramming,Boros06tr-binaryOpt}. 
 Integer programming is NP-hard in general, but one may still resort to 
 state-of-the-art integer solvers (e.g., \CPLEX~\cite{CPLEXwebsite}) for moderate-size instances.
 For quadratic and linear programs, integer solvers based on cutting plane methods or branch \& bound are able to produce 
 solutions for problems with few hundred variables relatively quickly (i.e., in few seconds), but 
 become unacceptably slow for larger problems.  
 A Branch-and-Cut approach is proposed in~\cite{Wang15ijcv}.
 An evaluation and a broader review of integer programming for MRFs is given in~\cite{Kappes15ijcv-energyMin}.

\subsection{Approximate and Local Inference in MRFs}
\label{sec:rw-techniques-approx}

\myparagraph{Iterative Local Solvers and Meta-heuristics} 
Local solvers start at a given initial guess and iteratively try to converge to 
a local optimum of the cost function. Early work includes 
the \emph{Iterative Local Modes} (ICM) of Besag~\cite{Besag86jrss-MRF}, which 
at each iteration greedily changes the label of a node in order to 
get the largest decrease in the cost. ICM is known to be very sensitive to the quality of the 
initial guess~\cite{Szeliski08pami-surveyMRF}. 
In order to improve convergence, Geman and Geman~\cite{Geman84pami-MRF} use Simulated Annealing to perform inference in MRFs.
Simulated Annealing requires exponential time to converge in theory 
and is notoriously slow in practice~\cite{Birchfield98pami-MRF}.

\myparagraph{Graph Cuts and Move-Making Algorithms}
While graph cut methods are able to compute globally optimal solutions in binary pairwise MRFs with submodular potentials (Section~\ref{sec:rw-techniques-exact}), 
they are only able to converge to local minima in non-submodular binary MRFs or in multi-class MRFs. 
For the binary case, related works~\cite{Kolmogorov05pami,Jegelka11cvpr} develop schemes to approximately solve MRFs with non-submodular potentials. 
Regarding the multi-class case, popular graph cut methods include the \emph{swap-move} ($\alpha$-$\beta$-swap) and the \emph{expansion-move} ($\alpha$-expansion) algorithms, both proposed in~\cite{Boykov01pami-graphCut}.
At each inner iteration, these algorithms solve a binary segmentation problem using graph cut, while the outer loop attempts to reconcile the binary results into 
a coherent multi-class segmentation. 
Boykov\setal\cite{Boykov01pami-graphCut}  show that
the swap-move algorithm is applicable whenever
the smoothness potentials are \emph{semi-metric} (i.e., $E_{ij}(\class_i,\class_j) = E_{ij}(\class_i,\class_j) \geq 0$ and $E_{ij}(\class_i,\class_j) = 0 \iff \class_i=\class_j$),
and the expansion-move algorithm is applicable whenever the 
smoothness potentials are \emph{metric}\footnote{Note that both the Potts model and the truncated $\ell_2$ distance are metrics.} 
(i.e., they are semi-metric and also satisfy the triangle inequality
$E_{ij}(\class_i,\class_j) \leq E_{ik}(\class_i,\class_k) + E_{kj}(\class_k,\class_j)$); these conditions are further generalized in~\cite{Kolmogorov04pami-graphCut}. 
Under these conditions, Boykov\setal\cite{Boykov01pami-graphCut} show that these graph cut methods produce ``strong'' local minima, i.e., local minima where no allowed move is able to further reduce the cost.
Moreover, these techniques produce a local solution with is proven to be within a known factor from the global minimum~\cite{Boykov01pami-graphCut}.  
When these conditions are not satisfied, approximations of the cost function can be used~\cite{Rother05cvpr-MRF,Boykov01pami-graphCut}.
Komodakis and Tziritas~\cite{Komodakis05cvpr} draw connections between move-making algorithms and the dual of linear programming relaxations.
Kumar and Koller~\cite{Kumar09uai,Torr09nips} propose a move-making approach that applies to the semi-metric case and attains the same guarantees of the 
linear relaxation (see paragraph below) in the metric case. 
Faster algorithmic variants are proposed by Alahari\setal~\cite{Alahari08cvpr}.
Lempitsky\setal~\cite{Lempitsky07iccv} provide a low-complexity algorithm (\emph{LogCut}) that requires an offline learning step.
A summary of the MRF formulations that can be solved 
 exactly or within a constant factor from the global minimum via graph cut is given in~\cite{Kolmogorov04pami-graphCut}. 
 When the potentials do not satisfy the conditions for applicability of graph cut methods, approximate versions of these techniques can be 
 still applied~\cite{Rother05cvpr-MRF} but the corresponding performance bounds no longer hold. 

\myparagraph{Message-Passing Techniques} 
Message passing techniques adjust the MAP estimate at each node in the MRF via local information exchange between neighboring nodes.
A popular message passing technique is \emph{belief propagation}~\cite{Pearl88book}, which results in exact inference in graphs 
without loops, but is also applicable to generic graphs~\cite{Felzenszwalb06ijcv-beliefPropagation,Freeman99ijcv} (\emph{loopy belief propagation}, or \LBP in short).
\LBP is not guaranteed to converge in presence of cycles, but if convergence is attained \LBP returns ``strong'' local minima~\cite{Weiss01itis-beliefPropagation,Wainwright04sc-beliefPropagation}. 
\emph{Tree-Reweighted Message Passing}~\cite{Wainwright05itis} (\TRW) is another popular message-passing algorithm which 
is also able to estimate a lower-bound on the cost that can be used to assess the quality of the solution. 
Also in this case the estimate is not guaranteed to converge and may oscillate. 
Message-passing techniques do not necessarily return integer solutions, hence the resulting estimates 
need to be rounded, see~\cite[Section 4.5]{Kappes15ijcv-energyMin}.
Kr\"{a}henb\"{u}hl and Koltun~\cite{Krahenbuhl11nips} use message passing to perform inference in a mean field approximation of a fully-connected 
Conditional Random Fields (CRFs).\footnote{Conditional Random Fields (CRFs) are a special case of MRFs, where the binary terms, rather than being 
smoothness priors, are data driven.} 

\myparagraph{Linear Programming (LP) Relaxations} 
These techniques relax the optimization to work on continuous labels rather than discrete ones. 
Early relaxation techniques include the LP relaxation of the \emph{local polytope}~\cite{Wainwright05itis}, which 
is typically applicable only to small problem instances~\cite{Kappes15ijcv-energyMin}. 
Kleinberg and Tardos~\cite{Kleinberg99focs} provide suboptimality guarantees for LP relaxations with metric potentials.
Gupta and Tardos~\cite{Gupta00acm} extend these results considering a truncated linear metric. 
Chekuri\setal~\cite{Chekuri01siam} and Werner~\cite{Werner07pami} further refine the suboptimality bounds.
Komodakis and Tziritas~\cite{Komodakis07pami} consider the case of semi-metric and non-metric potentials and derive primal-dual methods to 
efficiently solve the resulting LP relaxations.
Sontag and Jaakkola~\cite{Sontag08nips} propose a
cutting-plane algorithm for optimizing over the \emph{marginal} polytope.
Other specialized solvers to attack larger instances have also been proposed, including block-coordinate ascent~\cite{Kolmogorov06pami-MRF}, 
subgradient methods based on dual decomposition~\cite{Komodakis11pami-MRF,Kappes13cvpr-MRF,Guignard87mp}, 
Alternating Directions Dual Decomposition~\cite{Martins11icml}, and others~\cite{Savchynskyy12uai-MRF,Globerson07nips-MRF,Sontag12uai}. 
The performance of these techniques is typically sensitive to the choice of the parameters (e.g., stepsize) and can only ensure local convergence~\cite{Kappes15ijcv-energyMin}.
For binary pairwise MRFs, LP relaxation over the local polytope can be solved 
efficiently by reformulating it as a maximum flow problem, see the \emph{roof duality} (or QPBO) approach of Rother\setal~\cite{Rother07cvpr-MRF}. 
LP relaxations typically do not produce an integer solution, therefore the corresponding solutions need to be rounded. 
Moreover, they are tightly coupled with message-passing algorithms, see~\cite[Section 4.3]{Kappes15ijcv-energyMin}.  
Kumar\setal~\cite{Kumar08nips} provide a comparison between linear, quadratic, and second-order cone programming 
relaxations, showing that the linear relaxation dominates the others.

\myparagraph{Spectral and Semidefinite Relaxations} 
These techniques typically rephrase inference over an MRF in terms of a binary quadratic optimization problem~\cite{Olsson08cviu}, 
which can be then relaxed to a convex program (more details in Section~\ref{sec:standardSDPrelax}). 
Shi and Malik~\cite{Shi00pami} propose a spectral relaxation for image segmentation; more recently, spectral segmentation is used by Aksoy\setal~\cite{Aksoy18siggraph}.
Keuchel\setal~\cite{Keuchel03pami} introduce SDP relaxations to several computer vision applications and use 
interior-point methods and randomized hyperplane techniques to obtain integer solutions, leveraging the 
celebrated result of Goemans and Williamson~\cite{Goemans95acm}, which bounds the suboptimality of the resulting solutions.
SDP relaxations are known to provide better solutions than spectral methods~\cite{Keuchel03pami,Olsson08cviu}. 
While early approaches also recognized the accuracy of SDP relaxations with respect to commonly used alternatives (e.g.,~\cite{Kumar08nips}), 
the computational cost of general-purpose SDP solvers prevented widespread use of this technique beyond problems with few hundred  
variables~\cite{Keuchel03pami}. 
Keuchel\setal~\cite{Keuchel04dagm} propose an approach to reduce the dimension of the problem via image preprocessing and superpixel segmentation.
Concurrently, Torr~\cite{Torr03aistats} proposes the use of SDP relaxations for pixel matching problems.
Schellewald and C. Schn{\"o}rr~\cite{Schellewald05cvpr} suggest a similar SPD relaxation for subgraph matching in the context of object 
recognition.
Heiler\setal~\cite{Heiler05pr} propose to add constraints in the SDP relaxation to enforce priors (e.g., constrain the number of pixels in a class, or force set of  pixels to belong to the same class).
Olsson\setal~\cite{Olsson08cviu} develop a spectral subgradient method which is shown to reduce the relaxation gap of spectral relaxations.
Huang\setal~\cite{Huang14icml} use an Alternating Direction Methods of Multipliers to speed up computation, while
 Wang\setal~\cite{Wang13cvpr,Wang16pami} develop a specialized dual solver. 
 Frostig\setal~\cite{Frostig14nips} resort to non-convex optimization to approximate the SDP solution, while
Wang\setal~\cite{Wang15cvpr} consider fully-connected CRFs and propose fast solvers for the case where the pairwise potentials admit a low-rank decomposition.
We remark that the approach to derive the SDP relaxation is common to all papers above and follows the line of Section~\ref{sec:standardSDPrelax}.
 Wainwright and Jordan~\cite{Wainwright03nips} use semidefinite programming to approximately compute the marginal distributions in a graphical model.
More generally, semidefinite programming has been a popular way to relax combinatorial integer programming problems~\cite{Alizadeh95siam,Poljak95jgo}
and assignment problems~\cite{Zhao98co,Anstreicher01mp}. 

\subsection{Applications}
\label{sec:rw-applications}

\myparagraph{Overview}
MRFs have been  successfully used in several application domains including computer vision, computer graphics, 
machine learning, and robotics.
Popular applications include
image denoising, inpainting, and super-resolution~\cite{Boykov98cvpr-MRF,Boykov01pami-graphCut,Greig89jrss-graphCut,Felzenszwalb06ijcv-beliefPropagation},
image segmentation (reviewed below),
stereo reconstruction~\cite{Birchfield99cvpr-MRF,Boykov98cvpr-MRF,Boykov01pami-graphCut,Ishikawa98cvpr-MRF,Ishikawa98eccv-MRF,Kolmogorov01iccv-MRF,Roy99ijcv-MRF,Roy98iccv-MRF,Scharstein02ijcv},
panorama stitching and digital photomontages~\cite{Agarwala04acm},
image/video/texture synthesis~\cite{Kwatra03tg},
multi-camera scene reconstruction~\cite{Kolmogorov02b},
voxel occupancy estimation~\cite{Snow00cvpr},
non-rigid point matching and registration~\cite{Komodakis08eccv-MRF,Olsson08cviu},
medical imaging~\cite{Boykov00mic,Kim00med}.
In stereo reconstruction, the labels are the disparities at each pixel and the binary potentials are function of the absolute color differences at nearby pixels.
Birchfield and Tomasi~\cite{Birchfield98pami-MRF} provide a comparison of graph-cut methods for stereo reconstruction,
 while  Tappen and Freeman~\cite{Tappen03iccv-MRF} compare graph cut and \LBP; 
 Kolomogorow and Rother~\cite{Kolmogorov06eccv-MRF} evaluate \TRW, \LBP, and graph cut.
Szeliski\setal~\cite{Szeliski08pami-surveyMRF} compare several techniques on stereo reconstruction, 
photomontage, image segmentation, and image denoising benchmarks. 
 The study concludes the the expansion move algorithm typically outperforms the swap move algorithm, 
 while ICM performs poorly in practice.
 In general, the best approach may depend on the application:
 for instance, the expansion move algorithm is the best performer for the photomontage benchmark,
 while expansion move  and \TRW  perform the best on the depth reconstruction benchmark.
  A broader evaluation is presented in~\cite{Kappes15ijcv-energyMin}, which also provides a C++ library, \openGM~\cite{OpenGM2website},
   that implements several inference algorithms. 

\myparagraph{Semantic Segmentation}
Semantic segmentation methods assign a semantic label to each ``region'' in an RBG image (2D segmentation),
RBG-D image, or 3D model (3D segmentation). 
Depending on the approach, labels can be assigned to single pixels/voxels, superpixels, or keypoints~\cite{Kappes15ijcv-energyMin}; 
Since semantic segmentation is typically modeled as an MRF, the literature review in Sections~\ref{sec:rw-techniques-exact}-\ref{sec:rw-techniques-approx} 
already covers several work in segmentation, and indeed segmentation (together with depth reconstruction) is a typical benchmark for 
inference in MRFs, see~\cite{Szeliski08pami-surveyMRF,Blake11book-MRF,Kappes15ijcv-energyMin,Felzenszwalb11pami,Zhu16jcvir} and the references therein.
Therefore, the goal of this section is to (i) provide a brief taxonomy of semantic segmentation problems, and 
(ii) review semantic segmentation techniques that do not directly use MRFs. 
The corresponding literature is vast, and we refer the reader to the excellent survey of Zhu\setal~\cite{Zhu16jcvir} for a broader 
review of related work.

\emph{Taxonomy.}
Semantic segmentation is different from \emph{clustering}, which groups pixels based on similarities 
without necessarily associating a given semantic label to each group (this is sometimes called \emph{non-semantic},
 \emph{unsupervised}, or \emph{bottom-up} segmentation~\cite{Thoma16arxiv,Zhu16jcvir}).
While semantic segmentation classifies image regions into semantic classes, \emph{instance segmentation} also attempts to discern multiple objects belonging to the same class.
In full analogy with MRFs, segmentation problems can be divided in \emph{binary} segmentation problems (where only two classes, foreground and background, are segmented) and \emph{multi-class} segmentation problems, where more than two labels are allowed.
We can further divide the literature depending on the type of input data the segmentation operates on, including 
isolated RGB images (most common setup in computer vision), 
stereo images~\cite{Boykov01pami-graphCut}, 
RGB-D images~\cite{Deng15iccv,Gupta15ijcv},
volumetric 3D data (e.g., volumetric X-ray CT images~\cite{Hu01tmi}, or 3D voxel-based models~\cite{Kundu14eccv}),
or multiple RBG images; the latter setup is typically referred to as \emph{co-segmentation}~\cite{Rother06cvpr,Kim11iccv,Zhu16jcvir} (for 
generic unordered images), or temporal (or video) segmentation~\cite{Chen11wacv} (if images are collected over time).
Thoma~\cite{Thoma16arxiv} also categorizes the segmentation problems into \emph{active} (where one can influence the data collection 
mechanism, as it happens in robotics), \emph{passive} (where the input data is given), and \emph{interactive} (where 
a human user provides coarse information to the segmentation algorithms).

\emph{Other Approaches.} 
Traditional approaches for semantic segmentation work by extracting and classifying features in the 
input data, and then enforcing consistency of the classification across pixels (e.g., using MRFs or other models).
Common features include pixel color, histogram of oriented gradients, SIFT, or textons, to mention a few~\cite{Thoma16arxiv,Zhu05ijcv}.
Shotton\setal~\cite{Shotton08cvpr,Schroff08bmvc} use textons and Random Decision Forests for semantic segmentation.
Yang\setal~\cite{Yang12pami} use Support Vector Machine (SVM) demonstrating competitive performance in the 
PASCAL segmentation challenge~\cite{Everingham09ijcv}. 
A \emph{latent SVM} model is used by Felszenzwalb\setal~\cite{Felszenzwalb10pami} to detect objects using deformable part models.
Winn and Shotton~\cite{Winn06cvpr} use a CRF-based algorithm, 
named the \emph{Layout Conditional Random Field} (LayoutCRF), to detect and segment objects from their parts;
 the approach is further generalized by Hoiem\setal~\cite{Hoiem07cvpr}; 
 Shotton\setal~\cite{Shotton06eccv} use textons within a CRF model for object segmentation.
Kumar\setal~\cite{Kumar05cvpr} use MRFs to detect and segment objects in an image.
Bray\setal~\cite{Bray06eccv} concurrently segment and estimate the 3D pose of a human body from multiple views.
 Higher-order MRF formulations are also used for semantic segmentation, 
 see the work by Kohli and co-authors~\cite{Kohli09ijcv,Kohli07cvpr,Kohli09cvpr} 
 and the review~\cite{Kappes15ijcv-energyMin}.
 Approaches for interactive segmentation include \emph{intelligent scissors}~\cite{Mortensen95siggraph}, 
 active contour models~\cite{Kass87ijcv,Amini90pami} (based on dynamic programming), and graph cut methods (\emph{GrabCut}~\cite{Rother04siggraph}).
While most of the work mentioned so far operates on a discrete set of nodes of a graphical model, 
related work in multi-class segmentation also includes contributions modeling the problem over a continuous domain; 
examples of such efforts include the variational method of Lellmann\setal~\cite{Lellmann09iccv}, 
and the anisotropic diffusion method of Kim\setal~\cite{Kim11iccv}; see the chapter by Cremers\setal~\cite{Cremers11chapter} for a recent survey.
More recently, deep convolutional neural networks have become a popular solution for semantic segmentation, see the 
recent review of Garcia-Garcia\setal~\cite{GarciaGarcia17arxiv}. 
\change{}{State-of-the-art methods, such as \emph{DeepLab}~\cite{Chen18pami-deepLab}, 
refine the results of a deep convolutional network with a fully-connected conditional random field in order to improve the localization accuracy of object boundaries.}


\section{Related Work}\label{sec:relatedWork}

This section provides a short review of MRF inference and semantic segmentation. 
The reader can find a more comprehensive review in the supplemental material~\cite{Hu18tr-fuses}.

\myparagraph{Exact MRF inference}
Inference in MRFs is intractable in general. However, particular instances of the problem 
are solvable in polynomial time, e.g., binary MRF such as the Ising model~\cite{Greig89jrss-graphCut}.
For multi-label MRFs, exact solutions exist for the case when the binary terms are convex functions of the labels~\cite{Ishikawa03pami-MRF} 
or when the binary terms are linear and the unary terms are convex~\cite{Hochbaum01acm-MRF}. 
These assumptions are unrealistic in semantic segmentation and do not apply to the Potts model of Section~\ref{sec:preliminaries}.
When polynomial-time methods are not applicable,~\eqref{eq:MRF1} 
can be solved using integer programming (e.g., \CPLEX~\cite{CPLEXwebsite}) for moderate-sized problems, 
although this approach does not scale to large instances.

\myparagraph{Approximate and local MRF inference}
When MRF inference is NP-hard, one has to resort to approximation techniques. 
\emph{Move-making algorithms}~\cite{Kolmogorov05pami,Boykov01pami-graphCut},
at each inner iteration, solve a binary MRF using graph cut, while the outer loop attempts to reconcile the binary results into 
a coherent multi-class segmentation. 
\emph{Message-passing techniques} adjust the MAP estimate at each node in the MRF via local information exchange between neighboring nodes.
Popular message passing techniques are \emph{Loopy Belief Propagation}~\cite{Weiss01itis-beliefPropagation} (\LBP) and 
 \emph{Tree-Reweighted Message Passing}~\cite{Wainwright05itis} (\TRW).
 Both \LBP and \TRW are not guaranteed to converge in generic MRFs and may oscillate. 
\emph{Linear Programming Relaxations} relax MRF inference to work on continuous labels~\cite{Wainwright05itis,Kleinberg99focs,Chekuri01siam}.
\emph{Spectral and semidefinite relaxations} 
rephrase MRF inference in terms of a binary (pseudo-boolean) quadratic optimization~\cite{Shi00pami,Olsson08cviu}, 
which can be then relaxed to a convex program (see Section~\ref{sec:standardSDPrelax}). 
SDP relaxations are known to provide better solutions than spectral methods~\cite{Keuchel03pami,Olsson08cviu} 
 but  the computational cost of general-purpose SDP solvers prevented widespread use of this technique beyond problems with few hundred  
variables~\cite{Keuchel03pami}. 
Keuchel\setal~\cite{Keuchel04dagm} reduce the dimensionality of the problem via image preprocessing and superpixel segmentation.
Heiler\setal~\cite{Heiler05pr} add constraints in the SDP relaxation to enforce priors (e.g., to bound the number of pixels in a class).
Olsson\setal~\cite{Olsson08cviu} develop a spectral subgradient method which is shown to reduce the relaxation gap of spectral relaxations.
Huang\setal~\cite{Huang14icml} use an Alternating Direction Methods of Multipliers to speed up computation, while
 Wang\setal~\cite{Wang16pami} develop a specialized dual solver.

\myparagraph{Application to semantic segmentation}
Semantic segmentation methods assign a semantic label to each ``region'' in an RBG image,
RBG-D image, or 3D model~\cite[Section 6]{Hu18tr-fuses}. 
Traditional approaches for semantic segmentation work by extracting and classifying features in the 
input image, and then enforcing consistency of the classification across pixels,  using MRFs or other models.
Common features include pixel color, histogram of oriented gradients, SIFT, or textons, to mention a few~\cite{Zhu16jcvir}. 
More recently, deep convolutional neural networks (CNNs) have become a popular segmentation approach, see~\cite{GarciaGarcia17arxiv}. 
\change{}{State-of-the-art methods, such as \emph{DeepLab}~\cite{Chen18pami-deepLab}, 
refine the results of a deep convolutional network with a fully-connected conditional random field in order to improve the localization accuracy of object boundaries.}
}

\section{Conclusion}
\label{sec:conclusion}

We propose fast optimization techniques to solve two semidefinite relaxations of maximum a posteriori inference in Markov Random Fields (MRFs).
The first technique, named \dars (\emph{Dual Ascent Riemannian Staircase}), provides a scalable solution for the
standard SDP relaxation proposed in the literature. 
The second technique,  named \fuses (\emph{Fast Unconstrained SEmidefinite Solver}), is based on a novel relaxation.
\omitted{ 
Our contribution is to exploit the geometric structure of (new and old) semidefinite relaxations to obtain fast 
solutions via optimization on smooth Riemannian manifolds. }
We test the proposed approaches in semantic segmentation problems and compare them against state-of-the-art MRF solvers, including move-making 
and message-passing methods. Our experiments show that
(i) \fuses and \dars produce near-optimal solutions, attaining an objective within \percSubopt of the optimum,
(ii) our approaches are remarkably faster than general-purpose SDP solvers, 
while  \fuses is more than two orders of magnitude faster than \dars, 
(iii) \fuses is \claimTime  local search methods while being a global solver.
\isExtended{
\change{}{While the evaluation in this paper focuses on the MRF solver (rather than attempting to outperform state-of-the-art deep learning methods in semantic segmentation), we believe \fuses can be used \emph{in conjunction} with existing deep learning methods, as done in~\cite{Chen18pami-deepLab}, to refine the segmentation results.}
}{}


\bibliographystyle{IEEEtran}
\bibliography{refs,myRefs}

\isExtended{
\appendix

\section*{Appendix A: Equivalence between Problems~\eqref{eq:MRFmp2} and~\eqref{eq:MRF1}}
\label{sec:proof:eq:MRFmp1}

Here we prove that solving Problem~\eqref{eq:MRFmp2}  is equivalent to solving~\eqref{eq:MRF1}, in the sense that the solution set of a problem
is in 1-to-1 correspondence with the solution set of the other. 
Towards this goal, we show that~\eqref{eq:MRFmp2} can be simply obtained as a reparametrization of~\eqref{eq:MRF1}. 

We first rewrite each node variable $\class_i \in \classSet \doteq \{1,\ldots,\nrClasses\}$ in~\eqref{eq:MRF1} as a vector 
$\Classesmp_i \in \{-1,+1\}^\nrClasses$, such that $\Classesmp_i$ has a single entry equal to $+1$ (all the others are $-1$), and if the $j$-th entry of $\Classesmp_i$ is $+1$, then the corresponding node has label $j$. Each vector $\Classesmp_i \in \{-1,+1\}^\nrClasses$ is a valid label assignment 
as long as there is a unique entry equal to $+1$, or, equivalently, $\ones\tran \Classesmp_i = \cumsum, \quad i=1,\ldots,\nrNodes$.
Using this vector parametrization we  rewrite the unary and binary potentials~\eqref{eq:binary} as:
\bea
\label{eq:potentialsmp}
{E}_i(\Classesmp_i) = \frac{\penaltyTerm_{i}}{2} (1-\ve_{\measuredClass_i}\tran \Classesmp_i) \;,
\\
{E}_{ij}(\Classesmp_i, \Classesmp_j) = \frac{\penaltyTerm_{ij}}{2} (\Classesmp_i\tran \Classesmp_j - \nrClasses + 2) \nonumber
\eea
where $\ve_{\measuredClass_i}$ is a vector of all zeros, except the entry in position $\measuredClass_i$ (measured class label for node $i$), which is equal to $+1$.
The reparametrization of the unary potentials in~\eqref{eq:potentialsmp} can be  seen to be the same as~\eqref{eq:binary} by observing that $\ve_{\measuredClass_i}\tran \Classesmp_i = +1$ if $\class_i = \measuredClass_i$ 
or $-1$ otherwise; similarly, the reparametrization of the binary potentials follows from the fact that
$\Classesmp_i\tran \Classesmp_j = \nrClasses$ if $\class_i = \class_j$ or $\nrClasses-2$ otherwise.

Using~\eqref{eq:potentialsmp}, we rewrite Problem~\eqref{eq:MRF1} as: 
\beq
\begin{array}{rl}
\label{eq:MRFmp1}
\displaystyle
\min_{\substack{\Classesmp_i, i =1,\ldots,\nrNodes}}  &
\displaystyle\sum_{i \in \unarySet} - \frac{\penaltyTerm_{i}}{2} \ve_{\measuredClass_i}\tran \Classesmp_i
+ \displaystyle\sum_{(i,j)\in\binarySet} \frac{\penaltyTerm_{ij}}{2} \Classesmp_i\tran \Classesmp_j 
\\
\subject & \Classesmp_i \in \{-1,+1\}^\nrClasses
\\
         & \ones\tran \Classesmp_i = \cumsum, \quad i=1,\ldots,\nrNodes
\end{array}
\eeq
where we dropped the constant terms from~\eqref{eq:potentialsmp} (which are irrelevant for the optimization), and 
where the constraint $\ones\tran \Classesmp_i = \cumsum$ enforces each vector to have at most one entry equal to $+1$
(i.e., we assign a single label to each node).

In order to obtain Problem~\eqref{eq:MRFmp2}, we adopt a more compact notation by stacking  all vectors $\Classesmp_i$, with $i =1,\ldots,\nrNodes$, in a single 
$\nrNodes\nrClasses$-vector $\Classesmp = [\Classesmp_1\tran \; \Classesmp_2\tran \; \ldots \; \Classesmp_\nrNodes\tran]\tran$, and note that the cost
 function~\eqref{eq:MRFmp1}
is quadratic in the entries of $\Classesmp$. Therefore, we rewrite problem~\eqref{eq:MRFmp1} as:
\beq
\label{eq:MRFmp111}
\begin{array}{rl}
\min_{\Classesmp}  &  2 \vb\tran \Classesmp + \Classesmp\tran \MA \Classesmp
\\
\subject & \Classesmp \in \{-1,+1\}^{\nrNodes\nrClasses}, 
\\
         & \vu_i\tran \Classesmp = \cumsum, \quad i=1,\ldots,\nrNodes   
\end{array}
\eeq
where $\MA$ is an $\nrNodes\nrClasses \times \nrNodes\nrClasses$ symmetric block matrix, 
and $\vb$ is an $\nrNodes\nrClasses$-vector, and $\vu_i \doteq \ve_i\tran \kron \ones_\nrClasses\tran$, where $\ve_i$ is an $\nrNodes$-vector which is 
all zero, except the $i$-th entry which is one, $\ones_\nrClasses$ is a $\nrClasses$-vector of ones, 
and $\kron$ is the Kronecker product. The constraint $\vu_i\tran \Classesmp = \cumsum$ simply rewrites the constraint $\ones\tran \Classesmp_i = \cumsum$ in~\eqref{eq:MRFmp1}.
 The reader can also verify by inspection that the following choice of $\MA$ and $\vb$ ensures that the objective in~\eqref{eq:MRFmp111} is the same as~\eqref{eq:MRFmp1}:
\bea
\vb \doteq &
\left\{
\begin{array}{ll}
\,[\vb]_{i} =  -\frac{\penaltyTerm_{i}}{4} \ve_{\measuredClass_i}, & \text{ if } i \in \unarySet 
\\
\, [\vb]_{i}  = \zero_{\nrClasses}, & \text{ otherwise}
\end{array}
\right. 
\qquad \\
\MA \doteq &
\left\{
\begin{array}{ll}
\,[\MA]_{ij} =  -\frac{\penaltyTerm_{ij}}{2} \cdot \eye_{\nrClasses},&  \text{ if } (i,j) \in \binarySet\\
\,[\MA]_{ij} = \zero_{\nrClasses \times \nrClasses}, &\text{ otherwise}
\end{array}
\right. \hspace{-3mm}
\nonumber
\eea
where $\vb$ stacks $\nrNodes$ subvectors of size $\nrClasses$, $[\vb]_{i}$ is the $i$-th subvector of $\vb$,
$[\MA]_{ij}$ is the $\nrClasses \times \nrClasses$ block of $\MA$  in block row $i$ and block column $j$, 
and $\eye_{\nrClasses}$ is the identity matrix of size $\nrClasses$. 

Now we observe that for a scalar $\omega$, we can equivalently write $\omega \in \{-1,+1\}$ as $\omega \in \setdef{\Real{}}{\omega^2 = 1}$.
Moreover, we note that the diagonal of the matrix $\Classesmp\Classesmp\tran$ contains the squares of every entry of $\Classesmp$.
Combining these two observations, we rewrite problem~\eqref{eq:MRFmp1} equivalently as:
\beq
\begin{array}{rl}
\min_{\Classesmp}  & \Classesmp\tran \MA \Classesmp + 2 \vb\tran \Classesmp
\\
\subject & \diag{\Classesmp\Classesmp\tran} = \ones_{\nrNodes\nrClasses}, 
\\
         & \vu_i\tran \Classesmp = \cumsum, \quad i=1,\ldots,\nrNodes   
\end{array}
\eeq
which is the same as~\eqref{eq:MRFmp2}, concluding the proof.



\section*{Appendix B: Proof of Proposition~\ref{prop:dars:guarantees}}
\label{sec:proof:dars:guarantees}

Here we assume that the dual ascent iterations of \dars converged to a value $\vlambda^\star$ (i.e., the dual iterations reach a solution where the gradient in~\eqref{eq:gradAscent} is zero) and prove the two claims of Proposition~\ref{prop:dars:guarantees}.

Let us start by proving the first claim. 
Let us consider the latest dual ascent iteration, where the dual variable is $\vlambda^\star$ 
and we need to solve~\eqref{eq:RRT2} to compute an updated primal variable. Moreover, assume that the Riemannian Staircase, 
which solves~\eqref{eq:RRT2} for increasing rank $r$, finds a (column) rank-deficient second-order critical point~\eqref{eq:RRT2},
 which we call $\MR^\star$. Then, Proposition~\ref{prop:boumal} guarantees that the matrix $\MY^\star \doteq (\MR^\star)(\MR^\star)\tran$
 is an optimal solution for the SDP~\eqref{eq:primalDescent}. Moreover, since~\eqref{eq:primalDescent} simply rewrites~\eqref{eq:DA3}, 
 $\MY^\star$ is also an optimal solution for~\eqref{eq:DA3}:
\bea
\label{eq:dual0}
d(\vlambda^\star) \doteq
&  \min_{\MY}  g(\MY) + \sum_{i=1}^{\nrNodes} \vlambda^\star_i (\trace{\MU_i \MY} - \nrClasses + 2) = \\
& g(\MY^\star) + \sum_{i=1}^{\nrNodes} \vlambda^\star_i (\trace{\MU_i \MY^\star} - \nrClasses + 2)
\eea
Now we are left to prove that $\MY^\star$ is also an optimal solution for the standard SDP relaxation~\eqref{eq:SDPstandard}. 
Towards this goal, we observe that~\eqref{eq:dual0} is the \emph{dual function}~\cite[Section 5.1.2]{Boyd04book} of the optimization problem~\eqref{eq:DA1}, 
and standard results in optimization theory~\cite[Section 5.1.3]{Boyd04book} 
 ensure that $d(\vlambda)$ is a lower bound for the optimal cost of~\eqref{eq:DA1} for any $\vlambda$. 
 Since~\eqref{eq:DA1} is the same as~\eqref{eq:SDPstandard} and both attain the optimal objective $\fsdpmp$, it then holds
\beq
\label{eq:dual1}
d(\vlambda^\star) \leq \fsdpmp
\eeq
Since we assumed that the dual iterations reached a solution where the gradient in~\eqref{eq:gradAscent} is zero, 
and since the gradient with respect to the $i$-th dual variable is 
$\nabla \calL_{\vlambda_i}(\MY,\vlambda) = \trace{\MU_i \MY} - \nrClasses + 2$, it follows $\trace{\MU_i \MY^\star} = \nrClasses - 2$, for $i=1,\ldots,\nrNodes$.
This implies that $d(\vlambda^\star) = \trace{\ML \MY^\star}$\footnote{The indicator functions also disappear since $\MY^\star$ 
satisfies the corresponding constraints by construction.} and~\eqref{eq:dual1} becomes:
\beq
\label{eq:dual2}
\trace{\ML \MY^\star} \leq \fsdpmp
\eeq
Now we note that since $\MY^\star$ satisfies $\trace{\MU_i \MY^\star} = \nrClasses - 2$, for $i=1,\ldots,\nrNodes$, it is a feasible solution
for~\eqref{eq:SDPstandard}. Therefore by optimality of  $\fsdpmp$ it holds:
\beq
\label{eq:dual3}
\fsdpmp \leq \trace{\ML \MY^\star} 
\eeq
Combining~\eqref{eq:dual2} and~\eqref{eq:dual3} it follows $\trace{\ML \MY^\star} = \fsdpmp$, i.e., $\MY^\star$ attains the optimal objective in~\eqref{eq:SDPstandard}, proving the first claim.

To prove the second claim, we observe that~\eqref{eq:SDPstandard} is a relaxation of~\eqref{eq:MRFmp3}, hence it holds:
\beq
\label{eq:gap1}
\fsdpmp \leq \foptmp   \iff  -\foptmp \leq -\fsdpmp 
\eeq
Adding $\froundedmp$ to both sides of~\eqref{eq:gap1} yields the desired inequality $\froundedmp - \foptmp \leq \froundedmp - \fsdpmp$, concluding the proof 
of Proposition~\ref{prop:dars:guarantees}.


\section*{Appendix C: Proof of Proposition~\ref{prop:binaryMat}}
\label{sec:proof:prop:binaryMat}

Here we prove that solving~\eqref{eq:binaryMat} is equivalent to solving~\eqref{eq:MRF1}, in the sense that the solution set of a problem
is in 1-to-1 correspondence with the solution set of the other.
Towards this goal, we show that~\eqref{eq:binaryMat} can be simply obtained as a reparametrization of~\eqref{eq:MRF1}. 
Let us first consider the constraints in~\eqref{eq:binaryMat}, and note that the equality $\diag{\MX \MX\tran} = \ones_{\nrNodes}$ can 
be written explicitly as $\| \MX_i \|^2 = 1$ for $i=1,\ldots,\nrNodes$, where $\MX_i$ is the $i$-th row of $\MX$; since the $i$-th row $\MX_i$ 
 describes the label assignment of node $i$ and is a binary vector, the constraint $\| \MX_i \|^2 = 1$ enforces $\MX_i$ to have a unique
 nonzero element, hence matching the requirement of assigning a unique label to each node in~\eqref{eq:MRF1}.  

We are only left to prove that the two objective functions are equivalent. 
For this purpose, we observe that the objective in~\eqref{eq:MRF1}, 
with the choice of potentials in~\eqref{eq:binary}, can be written as a function of the rows of $\MX$ as:
\bea
E(\MX) = \sum_{i \in \unarySet} \penaltyTerm_{i} (1 - \ve_{\measuredClass_i}\tran \MX_i\tran)  
+ \sum_{(i,j)\in\binarySet} \penaltyTerm_{ij} (1 - \MX_i \MX_j\tran) 
\\
\equiv
\sum_{i \in \unarySet} - \penaltyTerm_{i} \ve_{\measuredClass_i}\tran \MX_i\tran 
+ \sum_{(i,j)\in\binarySet} -\penaltyTerm_{ij} \MX_i \MX_j\tran 
\eea
where $\equiv$ denotes equality up to constant (irrelevant for the optimization). 
Using definitions~\eqref{eq:GHdefs}, 
by inspection we can verify that 
$
\trace{\MG \MX\tran} = \sum_{i \in \unarySet} - \penaltyTerm_{i} \ve_{\measuredClass_i}\tran \MX_i\tran
$ 
and that 
\beal
\trace{\MX\tran \MH \MX} = \trace{\MH \MX \MX\tran} = \textstyle\sum_{\substack{i = 1,\ldots,\nrNodes \\ j = 1,\ldots,\nrNodes}} \MH_{ij} [\MX \MX\tran]_{ij} 
\\ = 
\textstyle\sum_{\substack{i = 1,\ldots,\nrNodes \\ j = 1,\ldots,\nrNodes}} \MH_{ij} \MX_i \MX_j\tran 
\overset{\tiny \text{using}~\eqref{eq:GHdefs}}{=}  \sum_{(i,j)\in\binarySet} -\penaltyTerm_{ij} \MX_i \MX_j\tran \nonumber
\eeal
which demonstrates the equality (up to constant) of the objective functions in~\eqref{eq:MRF1} and~\eqref{eq:binaryMat}, concluding the proof.
}{}
\end{document}